\newtheorem{theorem}{Theorem}
\newtheorem{definition}{Definition}
\newtheorem{lemma}{Lemma}[section]
\icmltitlerunning{Interpreting Spatially Infinite Generative Models}
\begin{document}

\twocolumn[
\icmltitle{Interpreting Spatially Infinite Generative Models}


\icmlsetsymbol{equal}{*}

\begin{icmlauthorlist}
\icmlauthor{Chaochao Lu}{cam,mpi}
\icmlauthor{Richard E.~Turner}{cam,msc}
\icmlauthor{Yingzhen Li}{msc}
\icmlauthor{Nate Kushman}{msc,dm}
\end{icmlauthorlist}

\icmlaffiliation{cam}{University of Cambridge, Cambridge, UK}
\icmlaffiliation{mpi}{Max Planck Institute for Intelligent Systems, T\"ubingen, Germany}
\icmlaffiliation{msc}{Microsoft Research, Cambridge, UK}
\icmlaffiliation{dm}{now at DeepMind}
\icmlcorrespondingauthor{Nate Kushman}{nate@kushman.org}

\icmlkeywords{interpretability, transparency}

\vskip 0.3in
]


\printAffiliationsAndNotice{}

\begin{abstract}
Traditional deep generative models of images and other spatial modalities can only generate fixed sized outputs.  The generated images have exactly the same resolution as the training images, which is dictated by the number of layers in the underlying neural network.  Recent work has shown, however, that feeding spatial noise vectors into a fully convolutional neural network enables both generation of arbitrary resolution output images as well as training on arbitrary resolution training images.  While this work has provided impressive empirical results, little theoretical interpretation was provided to explain the underlying generative process.  In this paper we provide a firm theoretical interpretation for infinite spatial generation, by drawing connections to spatial stochastic processes.  We use the resulting intuition to improve upon existing spatially infinite generative models to enable more efficient training through a model that we call an infinite generative adversarial network, or $\infty$-GAN.  Experiments on world map generation, panoramic images and texture synthesis verify the ability of $\infty$-GAN to efficiently generate images of arbitrary size.
\end{abstract}

\section{Introduction}

Generative modeling using neural networks has made significant progress over the last few years.  Especially dramatic has been the improvement of models for spatial domains such as images. Deep generative models, such as Generative-Adversarial Networks (GANs) \citep{goodfellow2014generative}, have been at the forefront of this effort due to their ability to generate sharp, photo-realistic images \citep{karras2018progressive,karras2018style,brock2018large}.
Despite this success, traditional generative models can only generate outputs of equal or smaller sizes than the original data on which they were trained. Furthermore, training on high resolution images is prohibitively expensive, as the size of the network typically must grow at least linearly with the resolution of the training samples.
In many domains, however, the resolution of the training samples is either extremely large, or effectively unbounded. Maps and earth imagery such as Landsat images are often very high resolution, and can cover arbitrarily large parts of the planet, resulting in images which are at least millions of pixels in each dimension.  Panoramic images can be very large in the horizontal dimension.  Texture images for graphics can be arbitrarily large depending on the object to which they are mapped, and medical images can often be $100,000\times100,000$ or larger~\citep{komura2018machine}.
Recent work \citep{jetchev2016texture,shaham2019singan} has shown that arbitrary sized images can be generated from a fixed sized neural network by feeding in a 2-d grid of noise vectors, i.e. a 3-d tensor of noise, rather than the single vector of noise used in traditional deep generative models.
While this work showed impressive empirical results, it did not provide a theoretical basis for these type of models.

In this paper we present a theoretical interpretation of infinite generative modelling based on the idea of {\em consistent} architectures, 
where the output generated by a given patch of noise will differ if that patch of noise is generated adjacent to another patch of noise, or on its own.  We show that past work has used {\em inconsistent architectures} which requires removing pixels from the boundary of image patches generated during the incremental generation process required to create large samples. This waste of computation can be significant, and we provide an in-depth analysis of the redundancy in terms of the fraction of discarded pixels.  Finally, we prove that by making small changes to the convolutional architectures we can ensure that the generator is a consistent transformation over spatial stochastic processes.  This ensures \emph{consistent} outputs, enabling the generated patches to be directly combined without removing pixels.

\begin{figure*}[t]
    \subfigure[A world map sample ($1395 \times 4595$, full-scale data image has size $1150 \times 2700$)]{
    \includegraphics[width = 0.735 \linewidth]{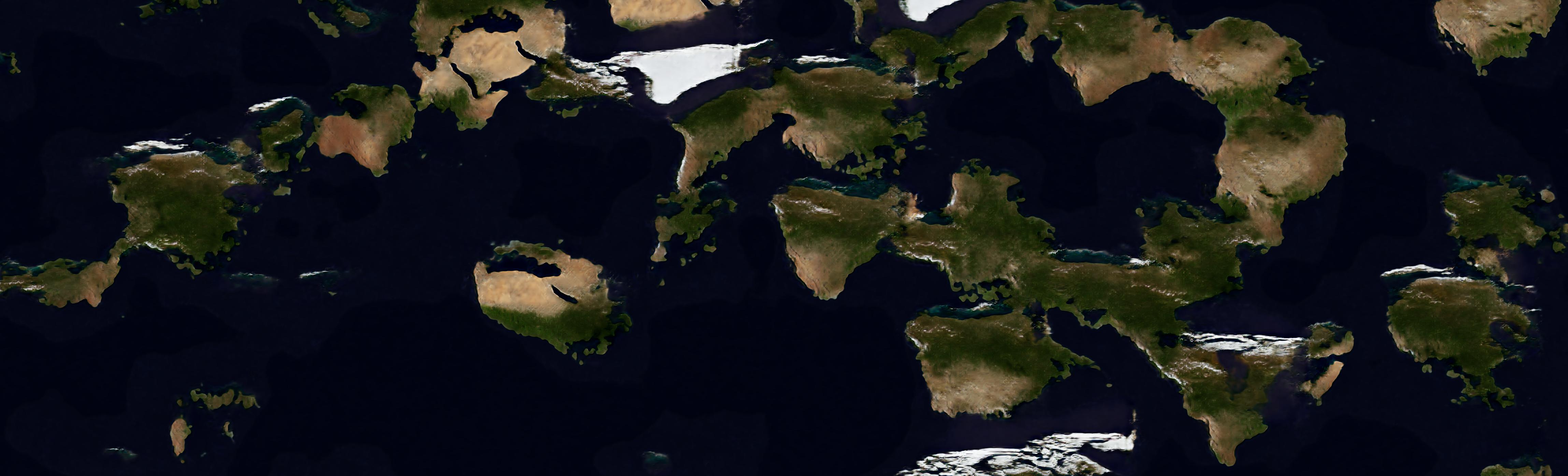}
    }
    \hfill
    \subfigure[Samples ($256\times256$)]{
    \includegraphics[width = 0.225 \linewidth]{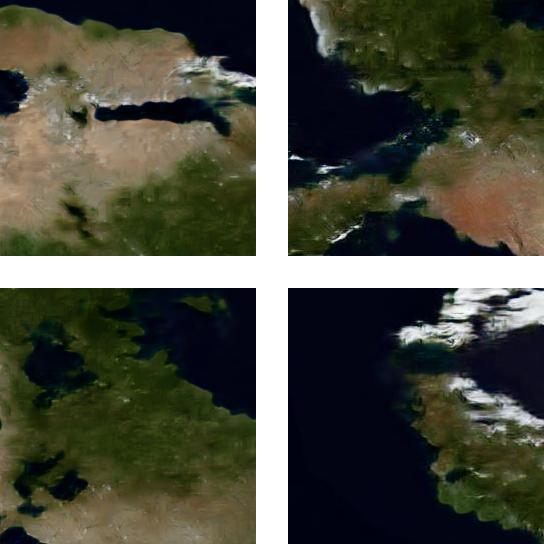}
    }
    \vspace{-5pt}
    \caption{
    The generated satellite world map from the $\infty$-GAN (trained on $64 \times 64$ patches). We see that the $\infty$-GAN can generate realistic looking continents with high-fidelity details.}
    \label{fig:generated_world_intro}
    \vspace{-12pt}
\end{figure*}
We use this intuition to design Infinite-GAN ($\infty$-GAN), which we test on texture images, panoramic views and a satellite map of the world. As shown in Figure \ref{fig:generated_world_intro}, our model can generate interesting new worlds without simply memorizing the Earth's world map, even when our model only observes $64 \times 64$ image patches during training.

\section{Theoretical Results}

\subsection{Redundancy in incremental generation with inconsistent network architectures}

To generate arbitrarily large images with convolutional neural networks (CNNs), the generator $G(\cdot)$ needs to be capable of handling noise inputs of arbitrary size. As convolution and up-samplinng operations are agnostic to the input sizes, a natural idea is to build a Spatial GAN-like generator $G(\cdot)$ with $K$ blocks of $\{\text{up-sample}, \text{conv } 3 \times 3 \text{ (stride 1, zero padding)} \}$ layers, in order to transform a latent tensor $\mathbf{z} \in \mathbb{R}^{h \times w \times C}$ into an image $\mathbf{x} \in \mathbb{R}^{2^K h \times 2^K w \times 3}$. 
Therefore $G(\cdot)$ is able to generate arbitrary large (but bounded) images \emph{in one shot} by increasing the size of $\mathbf{z}$ to any large (but also bounded) value.

However, \emph{incremental generation} is more suited to generating extremely large images by stitching small patches generated in a sequel. In such case the extension of $G(\cdot)$ to this task is less straight-forward.
More importantly, they produce \emph{inconsistent} image patches due to the use of padding. Given $\mathbf{z}_1 \subset \mathbf{z}_2$, the output image $G(\mathbf{z}_1)$ is not a sub-patch of $G(\mathbf{z}_2)$. Similarly when $\mathbf{z}_1 \cap \mathbf{z}_2 \neq \emptyset$, a native stitch of image patches $G(\mathbf{z}_1)$ and $G(\mathbf{z}_2)$ will result in tiling and/or boundary inconsistency artifacts.
An ad-hoc solution to this inconsistency issue is to stitch \emph{properly cropped} patches \citep{jetchev2016texture}. However, depending on the computational constraints on the maximum spatial size $(S, S)$ for one patch, it can result in discarding many pixels thus wasting a large amount of computation. This redundancy is characterized by the following result proved in Appendix \ref{sec:app_redunndancy}.
\begin{theorem}
\label{thm:redundancy}
Assume the stitched image has spatial size tending towards infinity. Then the fraction of discarded pixels is at least $4 ( \lfloor \frac{S}{2^K} \rfloor^{-1} - \lfloor \frac{S}{2^K} \rfloor^{-2} )$.
\vspace{-3pt}
\end{theorem}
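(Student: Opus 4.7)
The plan is to work in units of latent cells: since each of the $K$ blocks doubles the spatial resolution, a single latent cell in $\mathbf{z}$ spatially corresponds to a $2^K \times 2^K$ block of output pixels. If the per-patch spatial budget is $S \times S$ pixels, the largest patch obtainable from an aligned grid of latent cells has $n := \lfloor S/2^K \rfloor$ cells per side, so $n^2$ latent cells produce $n^2$ disjoint $2^K \times 2^K$ output blocks. I will count in these blocks and convert to pixels by multiplying by $2^{2K}$ at the end.

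Next, I would argue that under stitching, the output blocks corresponding to the outermost ring of latent cells have to be discarded. The key point is that the $3\times 3$ convolutions use zero padding: a latent cell on the boundary of a patch is processed with artificial zero neighbours, whereas the same cell placed in the interior of a larger latent tensor would be processed with its true neighbours. Propagating this discrepancy upward through the $K$ upsample+conv blocks shows that every output pixel in a boundary cell's $2^K \times 2^K$ block is a non-trivial function of the injected zeros, hence inconsistent with the value it would take in any overlapping patch. Consequently, consistent stitching can retain at most the $(n-2)^2$ interior cells per patch, regardless of how pixels are grouped within a cell, because the latent cell is the atomic unit through which noise enters the network.

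With that, the accounting is immediate. In the infinite-stitching limit, the interior-cell blocks of adjacent patches tile the output plane losslessly while the boundary-cell blocks are thrown away. Thus, up to vanishing outer-boundary effects of the entire stitched image, the fraction of discarded pixels is at least
\[
1 - \frac{(n-2)^2}{n^2} \;=\; \frac{4n - 4}{n^2} \;=\; 4 \left( \frac{1}{n} - \frac{1}{n^2} \right),
\]
which, substituting $n = \lfloor S/2^K \rfloor$, is exactly the stated bound. It is a lower bound because a more careful receptive-field analysis can only force additional rings of cells to be cropped, never fewer.

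The main obstacle is the second step: rigorously showing that every output pixel in a boundary cell's block genuinely depends on some injected zero and therefore cannot be kept. I would formalize this by induction on $k \in \{1, \dots, K\}$, establishing that after the $k$-th block the ``zero-contaminated'' region of the feature map strictly contains the up-sample of the previous contaminated region, which rules out cancellations that might restore a boundary pixel to its interior-patch value. A secondary subtlety is the infinite-size limit: the unpaired outer border of the stitched image has area growing only linearly in its side length while the total area grows quadratically, so its contribution to the discarded fraction vanishes and the ratio above is exact in the limit.
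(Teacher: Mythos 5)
Your final formula and the high-level accounting (count the discarded rows/columns per interior patch, note that boundary patches of the stitched image are negligible in the limit, then substitute the maximal latent size $n=\lfloor S/2^K\rfloor$) match the paper's proof. However, the central claim you rely on is false, and this is a genuine gap. You assert that every output pixel in a boundary latent cell's $2^K\times 2^K$ block depends on the injected zeros and must therefore be discarded. The induction you sketch (contaminated width $w\mapsto 2w$ after up-sampling, then $2w+1$ after a $3\times3$ convolution, starting from $w=0$) is exactly the paper's Lemma on inconsistent pixels, but it yields a contaminated border of width $2^K-1$, which is strictly \emph{less} than the cell width $2^K$: the innermost row/column of each boundary cell's block is consistently generated. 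So your step (a) cannot be established by the receptive-field induction you propose, and your parenthetical ``a more careful receptive-field analysis can only force additional rings of cells to be cropped'' points in the wrong direction --- the careful analysis forces \emph{fewer} pixels to be cropped than whole cells, leaving you $2$ rows and $2$ columns short of the $2^{K+1}$ needed per interior patch.

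The missing ingredient is the paper's overlap lemma. For two neighbouring patches with latent index sets $J_1,J_2$, the consistent regions of the outputs must at least be adjacent, which requires $2^K b_1 - 2^K + 1 \geq 2^K a_2 + 2^K - 1$, i.e.\ an integer latent overlap $b_1-a_2\geq 2$ when $K\geq 2$ (overlap $1$ leaves an uncovered gap of $2^K-2$ pixels; exact adjacency would need the non-integer overlap $2-2^{1-K}$). With the minimal overlap of $2$ latent cells, the consistent regions of adjacent patches overlap in exactly $2$ rows/columns of \emph{consistent} pixels, one copy of which is redundant and discarded. Adding these $2$ duplicated rows/columns to the $2(2^K-1)$ inconsistent ones gives the $2^{K+1}$ discarded rows and columns per interior patch, and hence the fraction $1-(N-2)^2/N^2 = 4(N^{-1}-N^{-2})$ with $N\leq\lfloor S/2^K\rfloor$ (the bound follows since this expression is decreasing in $N$). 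Your cell-level tiling picture is a correct \emph{description} of the resulting stitching scheme, but without the integer-overlap/duplication argument the claimed per-patch retention of only $(n-2)^2$ cells does not follow, since each single patch actually contains $2^K(n-2)+2$ consistent rows per side.
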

As an example, consider $S = 4096$ and $K = 9$, then the redundancy figure in Theorem \ref{thm:redundancy} is $43.75\%$. This redundancy can be reduced by decreasing $K$, but then the network will fail to represent global features in a big image. On the other hand, although increasing $K$ makes the generator more flexible, the benefit is offset by inefficient generation process. 

Since one can keep generating patches in the incremental generation process, the output image is effectively unbounded thus \emph{infinite} dimensional. Mathematically this means incremental generation can only be achieved by considering \emph{consistent} transformations over \emph{spatial stochastic processes}; effectively the above incremental generation approach with inconsistent CNNs is one such transformation. Below we establish in theory a \emph{consistent} transform of spatial stochastic processes and discuss a more efficient CNN parameterization of such transformations.

\begin{figure*}[t]
\begin{minipage}{0.73\linewidth}
\captionof{table}{Summary of consistent and stationarity preserving transforms on (stationary) spatial processes. $^*$Proved for commonly used activation functions (sigmoid, ReLU, etc.). $^{**}$It transforms a strictly stationary spatial process to a cyclostationary spatial process.}
\centering
\label{tab:theory_summary}
\setlength{\tabcolsep}{2pt}
\scalebox{0.85}{
\begin{tabular}{ccc}
\toprule
 layer & consistent? & stationarity preserving? \\
\midrule
convolution (zero/constant padding) & No & No (not a valid transform) \\
convolution (no padding, stride 1) & Yes & Yes \\
pixel-wise non-linearity & Yes$^*$ & Yes \\
up-sampling (nearest, scale 2) & Yes & Yes$^{**}$ \\
up-sampling (bi-linear, scale 2, boundary crop) & Yes & Yes$^{**}$ \\
pixel normalization & Yes & Yes \\
\bottomrule 
\end{tabular}
}
\end{minipage}
\hfill
\begin{minipage}{0.25\linewidth}
\centering
\includegraphics[width=0.9\linewidth]{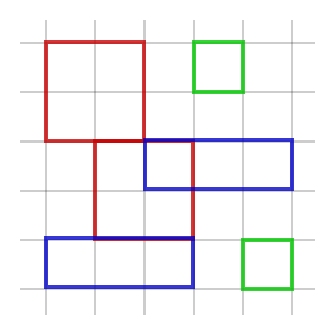}
\vspace{-0.15in}
\captionof{figure}{Visualizing cyclostationarity with period $(2, 1)$.}
\label{fig:patch_stationarity}
\end{minipage}
\vspace{-7pt}
\end{figure*}

\subsection{A convolutional neural network transform for spatial stochastic processes}


\paragraph{Consistent transforms for spatial stochastic processes}

Mathematically, using latent tensor of arbitrary sizes is equivalent to using a spatial stochastic process $\bm{z}(\cdot, \cdot) = \{ \bm{z}(i, j) \in \mathbb{R}^C \ | \ (i, j) \in \mathbb{Z}^2 \}$ as the latent input, e.g.~i.i.d.~Gaussian noise, i.e.~$\bm{z}(i, j) \overset{\text{i.i.d.}}\sim \mathcal{N}(0, \mathrm{I})$.
As computers cannot handle objects of infinite dimensions, in practice we first sample a ``noise patch'' $\bm{z}(J) = \{ \bm{z}(i, j) \ | \ (i, j) \in J \}$ with \emph{latent pixel index set} $J = \{1, ..., h\} \times \{1, ..., w\}$ of size $(h, w)$, then pass it through the generator and obtain an image $\bm{x}(I) = \{ \bm{x}(i, j) \in \mathbb{R}^3 \ | \ (i, j) \in I \} = G(\bm{z}(J))$ of size $(H, W)$ with \emph{image pixel index set} $I = \{1, ..., H\} \times \{1, ..., W \}$. For notation ease we write $\{a:b\} = \{a, ..., b \}$ for $a < b$, and when $a=b$, $\{a:b\}=\{a\}$.
Also we wish to have the following two desirable properties to allow \emph{incremental} generation of unbounded images from patches:
\begin{itemize}
\vspace{-10pt}
\setlength\itemsep{0em}
\setlength{\itemindent}{0em}
    \item Marginalization consistency: for any latent pixel index sets $J' \subset J$ of finite sizes (and the corresponding image pixel index sets $I' \subset I$), taking the patch with index set $I'$ from $\bm{x}(I) = G(\bm{z}(J))$ is equivalent to directly generating $\bm{x}(I') = G(\bm{z}(J'))$;
    \item Permutation invariance: generating two patches $\bm{x}(I), \bm{x}(I'')$ in the same image is invariant to the order of generation.
\vspace{-10pt}
\end{itemize}
If these two conditions are meet, then incremental image generation can be achieved by cutting the latent pixel index set $J$ into subsets $\{J ^n \ | \ \cup_n J^n = J, J^{n_1} \cap J^{n_2} \neq \emptyset \}$ with proper index subset overlap, sampling $\bm{z}(J)$. and stitching the individual patches $G(\bm{z}(J^n))$ together (without cropping) to obtain $\bm{x}(I) = \cup_n G(\bm{z}(J^n))$. Marginalization consistency ensures that $G(\bm{z}(J)) = \cup_n G(\bm{z}(J^n))$, and permutation invariance allows each patch $G(\bm{z}(J^n))$ to be generated in random order (provided an algorithm to retrieve the corresponding subset sample $\bm{z}(J^n)$ from $\bm{z}(J)$).

The two properties can be ensured by the \emph{consistency} requirement for $G(\cdot)$: denote the corresponding operator on stochastic processes as $\mathcal{O}_G$, then the output $\bm{x}(\cdot, \cdot) = \{ \bm{x}(i, j) \in \mathbb{R}^3 \ | \ (i, j) \in \mathbb{Z}^2 \} = \mathcal{O}_G(\bm{z}(\cdot, \cdot))$ also needs to be a spatial stochastic process. We prove in Appendix \ref{sec:consistency_proofs} the consistency results for popular CNN building blocks summarized in Table \ref{tab:theory_summary}. Specifically, padding is discarded to remove inconsistency at the boundary of each layer. Similarly the bi-linear up-sampling operation (scale 2) is adapted by cropping out the boundary pixels with edge width 1.
Our generative network (discussed below) only uses consistent transforms, and we have the following result.
\begin{theorem}
$\mathcal{O}_G$ is a consistent operator on spatial stochastic processes if $G(\cdot)$ is constructed using the consistent transforms presented in Table \ref{tab:theory_summary}.
\vspace{-4pt}
\end{theorem}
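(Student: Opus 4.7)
The plan is to prove this by induction on the depth of the network, reducing the global statement to the per-layer consistency results that are already stated to be established in Table \ref{tab:theory_summary} (and proved in Appendix \ref{sec:consistency_proofs}). First I would fix a precise working definition: an operator $\mathcal{O}$ on spatial stochastic processes is \emph{consistent} if, for every finite index set $J'$ contained in a larger finite index set $J$, restricting $\mathcal{O}(\bm{z}(J))$ to the image-side indices $I'$ corresponding to $J'$ gives the same distribution as $\mathcal{O}(\bm{z}(J'))$, and if the joint law of any two output patches is independent of the order in which they are formed. These are exactly the two bullet points (marginalization consistency and permutation invariance) required in the excerpt, and by Kolmogorov's extension theorem they together guarantee that $\mathcal{O}(\bm{z}(\cdot,\cdot))$ is a well-defined spatial stochastic process on $\mathbb{Z}^2$.

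Next, I would write the generator as a composition $G = L_K \circ L_{K-1} \circ \cdots \circ L_1$, where each $L_k$ is one of the layers listed as consistent in Table \ref{tab:theory_summary}. The induction hypothesis at depth $k$ is that $\mathcal{O}_{L_k \circ \cdots \circ L_1}$ is a consistent operator, with a tracked index-set map $\Phi_k$ sending a finite latent index set $J$ to the corresponding intermediate index set $\Phi_k(J)$. The base case $k=0$ is immediate, since the identity operator on $\bm{z}(\cdot,\cdot)$ is trivially consistent. For the inductive step I would observe that if $\mathcal{O}_{L_k \circ \cdots \circ L_1}$ is consistent and $\mathcal{O}_{L_{k+1}}$ is consistent (each by the per-layer results in Appendix \ref{sec:consistency_proofs}), then their composition is consistent: given $J' \subset J$, applying marginalization consistency of the inner operator shows that the intermediate process restricted to $\Phi_k(J')$ matches the one produced from $\bm{z}(J')$, and then applying marginalization consistency of $L_{k+1}$ on that intermediate process yields the same distribution on $\Phi_{k+1}(J')$ either way. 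Permutation invariance transfers the same way: if it holds at the intermediate stage and is preserved by $L_{k+1}$, it holds at the next stage.

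The main bookkeeping obstacle is keeping track of how the effective index sets evolve through the layers, since the layers in Table \ref{tab:theory_summary} act at different spatial scales. Specifically, no-padding $3\times 3$ convolution shrinks the index set by a boundary of width $1$, up-sampling doubles it, and boundary-cropped bi-linear up-sampling doubles and then trims edges of width $1$; pixel-wise non-linearities and pixel normalization leave it unchanged. I would compose these per-layer maps into a single map $\Phi_K$ from the latent index set to the output index set, and verify that enlarging $J$ to $J \supset J'$ always produces an image index set $\Phi_K(J) \supset \Phi_K(J')$, which is exactly what marginalization consistency needs to chain through the layers. This is mechanical but is the only place where care is required.

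Finally, I would conclude that the composite $\mathcal{O}_G$ is a consistent operator, and hence that $\bm{x}(\cdot,\cdot) = \mathcal{O}_G(\bm{z}(\cdot,\cdot))$ is a well-defined spatial stochastic process on $\mathbb{Z}^2$, giving the theorem. I do not expect any substantive obstacle beyond the index-set bookkeeping, because all the probabilistic content is already packaged inside the per-layer lemmas of Appendix \ref{sec:consistency_proofs}; the theorem is essentially the statement that consistency is closed under composition.
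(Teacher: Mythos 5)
Your proposal is correct and follows essentially the same route as the paper: the paper's argument is precisely that it suffices to show each building block (no-padding convolution, pixel-wise non-linearity, pixel normalization, cropped bi-linear and nearest up-sampling) maps a spatial stochastic process to a spatial stochastic process, with the theorem then following because the generator is a composition of such layers. Your explicit induction over depth and the index-set bookkeeping merely spell out the composition step that the paper leaves implicit, while deferring all probabilistic content to the same per-layer lemmas of Appendix~\ref{sec:consistency_proofs}.
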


%
\paragraph{The stationarity pattern from CNN transforms}

A generative model can only see finite number of images of finite size in training, thus without further assumptions, the generator cannot generate realistic looking images of sizes that are larger than the maximum size of observed images. 

However, if the input $\bm{z}(\cdot, \cdot)$ in training time is a \emph{strictly stationary} spatial stochastic process, e.g.~i.i.d.~Gaussians, then for any index set $J \subset \mathbb{Z}^2$, the distribution of $\bm{z}(J)$ is shift-invariant for any shifting direction. Thus if the generator $G(\cdot)$ preserves stationarity on patch level, then during training it suffice to fix e.g.~$J = \{1: h \} \times \{1: w \}$, and match the distribution of $\bm{x}(I) = G(\bm{z}(J))$ to the distribution of training image patches of size $(H, W)$.

In math, this requires the generator to be able to transform a strictly stationary process into a cyclostationary process.
\begin{definition}
A spatial stochastic process $\bm{x}(\cdot, \cdot)$ is cyclostationary with period $(H , W)$ if for all $I = \{a : b\} \times \{c : d \} \subset \mathbb{Z}^2$ and $\tilde{I} = \{a + H : b + H\} \times \{c + W : d + W \}$, $\bm{x}(I) \overset{\text{d}}= \bm{x}(\tilde{I})$.
\vspace{-3pt}
\end{definition}
We visualize cyclostationarity with period $(2, 1)$ in Figure \ref{fig:patch_stationarity}; in this figure patches with bounding boxes in the same color have the same distribution.
Note that strictly stationary is cyclostationary with period $(1, 1)$; in general the pixels inside an $(H, W)$ patch do not necessarily have the same marginal distributions. 
In Appendix \ref{sec:stationarity_proofs} we investigate the \emph{stationarity preserving} properties, i.e.~given a cyclostationary input process, whether the output process is still cyclostationary, for CNN building blocks. Results are again summarized in Table \ref{tab:theory_summary}; using stationarity preserving transforms as CNN layers, we can prove the following result.
\begin{theorem}
$\mathcal{O}_G$ preserves cyclostationarity if $G(\cdot)$ is constructed using the stationarity preserving transforms presented in Table \ref{tab:theory_summary}.
\vspace{-3pt}
\end{theorem}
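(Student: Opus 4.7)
The plan is a compositional argument: since $G$ is assembled by stacking the building blocks listed in Table \ref{tab:theory_summary}, and each one is individually stationarity preserving (as will have been shown block-by-block in Appendix \ref{sec:stationarity_proofs}), it suffices to show that the composition of stationarity preserving operators on cyclostationary processes yields a cyclostationary process. Because different blocks change the period in different ways (e.g.\ scale-2 upsampling doubles it, convolution without padding leaves it unchanged), the right object to induct on is the pair \emph{(process, period)}, not just the process.

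First I would state precisely the single-block claim that will be imported from the appendix: for each layer type $L$ appearing in the table, if $\bm{y}(\cdot,\cdot)$ is cyclostationary with period $(P, Q)$, then $L(\bm{y})(\cdot,\cdot)$ is cyclostationary with period $(\alpha_L P,\beta_L Q)$, where $(\alpha_L,\beta_L)=(2,2)$ for a scale-2 upsampling layer and $(1,1)$ for convolution without padding, pixel-wise nonlinearities, and pixel normalization. The input process $\bm{z}(\cdot,\cdot)$ is strictly stationary, which is cyclostationary with period $(1,1)$, so this serves as the base case.

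Next I would prove the theorem by induction on the depth $\ell=0,1,\dots,L$ of the architecture. Write $\bm{y}^{(\ell)}=L_\ell\circ\cdots\circ L_1(\bm{z})$ for the activation after $\ell$ layers. The induction hypothesis is that $\bm{y}^{(\ell)}$ is cyclostationary with period $(P_\ell,Q_\ell)=\bigl(\prod_{k\le\ell}\alpha_{L_k},\prod_{k\le\ell}\beta_{L_k}\bigr)$, starting from $(P_0,Q_0)=(1,1)$. The inductive step follows immediately by applying the single-block claim to $L_{\ell+1}$. At $\ell=L$ this yields cyclostationarity of $\bm{y}^{(L)}=\mathcal{O}_G(\bm{z})$ with period $(H,W):=(2^K,2^K)$ where $K$ is the number of upsampling blocks; this is the conclusion. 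The fact that the operator $\mathcal{O}_G$ is well-defined on spatial stochastic processes (not just on finite patches) is guaranteed by the already-established consistency of $G$ from the previous theorem, which is what allows us to speak of the distribution of $\bm{y}^{(L)}(I)$ for arbitrary finite $I\subset\mathbb{Z}^2$ via Kolmogorov extension.

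The main obstacle is the upsampling step in the single-block claim, because it is the only layer that genuinely changes the period, and it is the reason the output is cyclostationary rather than strictly stationary. Concretely, for nearest-neighbor upsampling with scale 2 one must verify that for every finite $I\subset\mathbb{Z}^2$ the joint distribution of the output on $I$ equals that on $I+(2P_\ell,2Q_\ell)$, and this reduces, via the pre-image map $(i,j)\mapsto(\lfloor i/2\rfloor,\lfloor j/2\rfloor)$, to the hypothesis on the input; the shift by two output pixels corresponds to a shift by one input pixel, which explains the doubling of the period. The bi-linear case is analogous after the boundary crop of width 1 is applied, which is exactly why that crop is required. Convolution with no padding and stride 1 is the easy case: a shift of the output window by $(P_\ell,Q_\ell)$ is the same as a shift of the input receptive field by $(P_\ell,Q_\ell)$, so the equality in distribution follows from the inductive hypothesis; pixel-wise nonlinearities and pixel normalization are memoryless in the spatial index and therefore trivially commute with the shift.
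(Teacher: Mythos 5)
Your proposal is correct and follows essentially the same route as the paper: the theorem is obtained by composing the per-layer stationarity-preservation lemmas of Appendix~\ref{sec:stationarity_proofs} (convolution without padding, pixel-wise nonlinearities and pixel normalization preserve the period, while scale-2 up-sampling doubles it), exactly the shift/pre-image arguments you sketch. The only difference is one of explicitness: you spell out the induction on depth with the period pair $(P_\ell, Q_\ell)$ and the generalization of the single-block claims from strictly stationary to arbitrary-period cyclostationary inputs, which the paper leaves implicit, stating the lemmas for strictly stationary inputs and remarking that the same techniques show each up-sampling layer multiplies the period by $2$.
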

We note that using up-sampling of scale $2$ increases the stationarity period by $2$. For i.i.d.~Gaussian inputs, this means $\bm{x}(\cdot, \cdot)$ has stationarity period $(2^K, 2^K)$ if $G(\cdot)$ uses $K$ up-sampling layers.

\begin{table*}
 \caption{{\bf Satellite image FID scores} (mean $\pm$ std.) from a bootstrap estimate of FID (5 random crops). The $\infty$-GAN with the proposed multi-scale training significantly outperforms the baselines.}  \label{tab:world}
 \begin{center}
   \begin{tabular}{c|cccc}
 \toprule

     model / patch size &$32\times 32$&$64\times 64$&$128\times 128$&$256\times 256$\\
\midrule
Spatial GAN & $40.93 \pm 0.68$ & $43.75 \pm 0.33$ & $54.41 \pm 0.25$ & $71.69 \pm 0.25$ \\
\hline
PSGAN & $34.55 \pm 0.25$ & $48.77 \pm 0.20$ & $64.93 \pm 0.55 $  & $81.65 \pm 0.32$ \\
\hline
$\infty$-GAN - $\mathcal{D}_2$ Only & $39.87 \pm 0.35$ & $67.05 \pm 0.37$ & $95.12 \pm 0.91$ & $133.75 \pm 0.27$ \\
\hline
$\infty$-GAN - Full & $12.09 \pm 0.25$ & $24.18 \pm 0.09$ & $45.35 \pm 0.22$ & $56.60 \pm 0.31$ \\
\bottomrule
   \end{tabular}
 \end{center}
 \vspace{-15pt}
 \end{table*}

\begin{figure*}[t]
  \centering
  \subfigure[Original Data]{
    \includegraphics[width = 0.18 \linewidth]{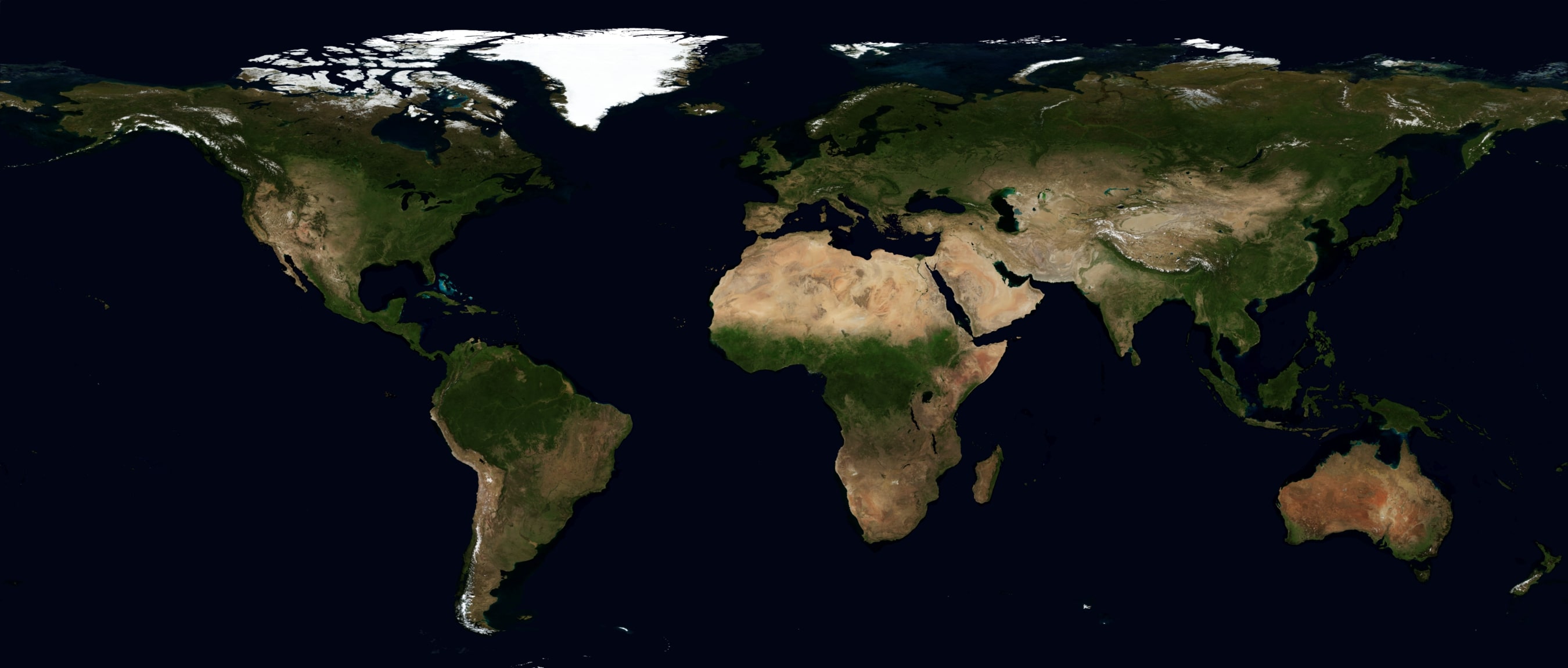}
    }
    \hfill
  \subfigure[Spatial GAN]{
    \includegraphics[width = 0.18 \linewidth]{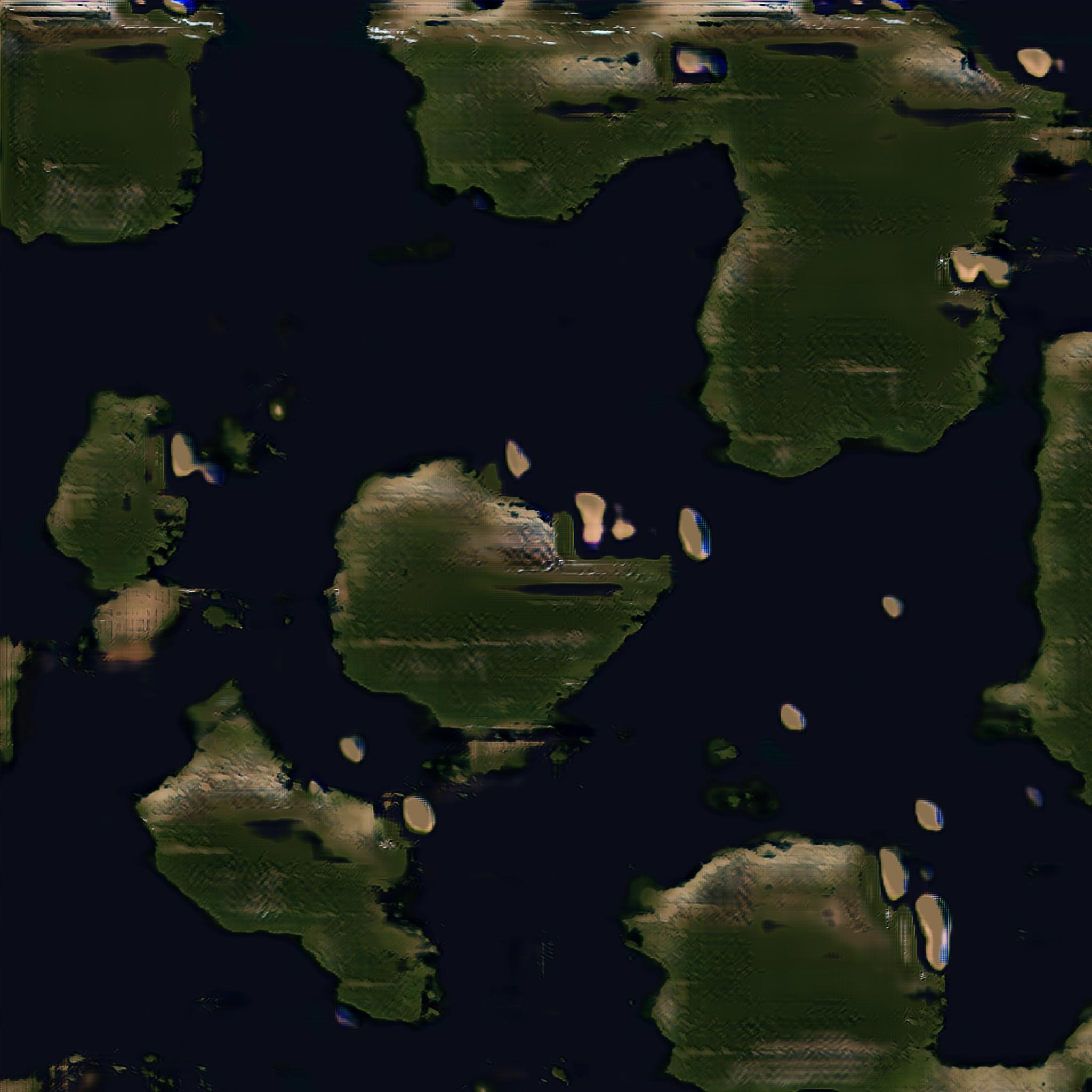}
    }
    \hfill
  \subfigure[PSGAN sample]{
    \includegraphics[width = 0.18 \linewidth]{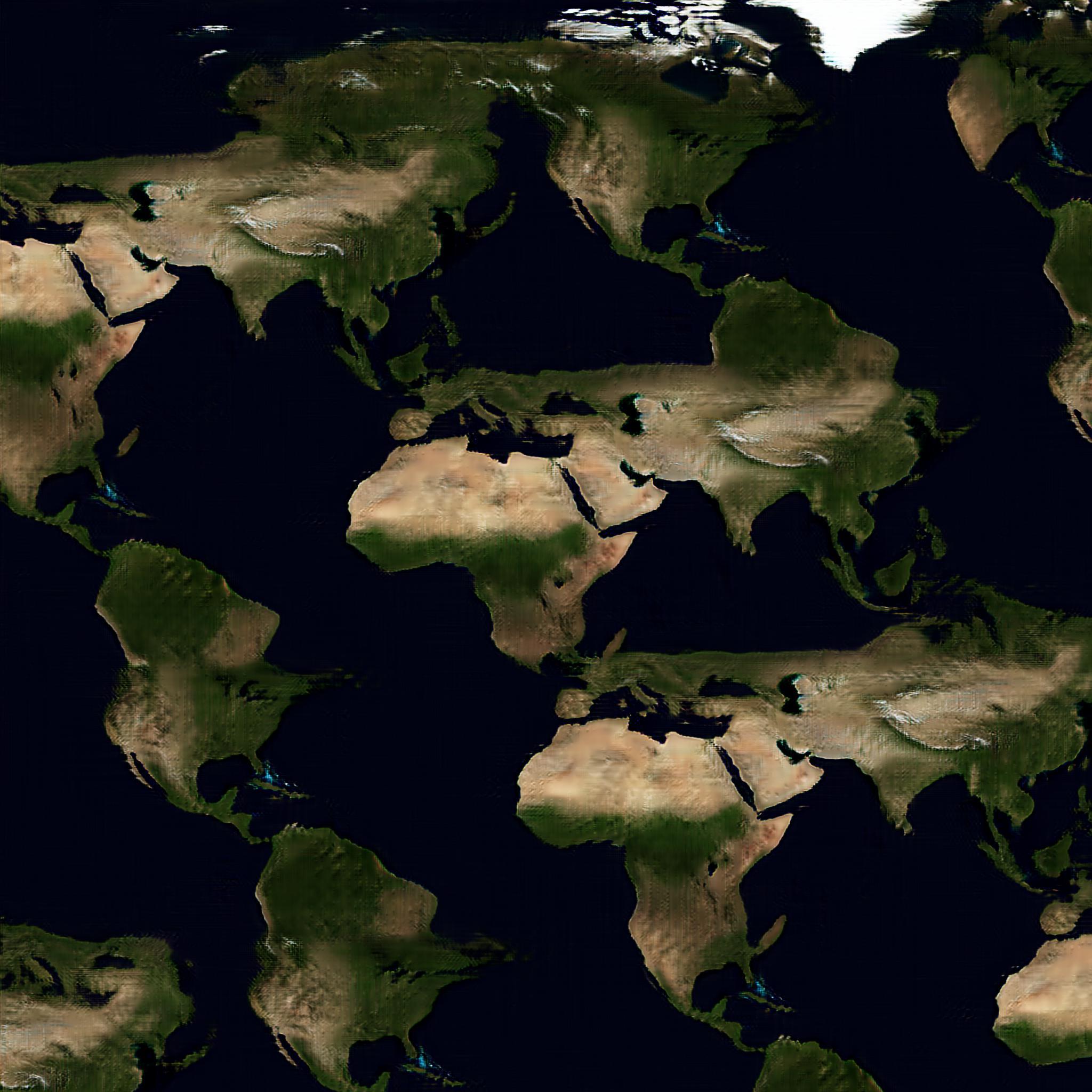}
    }
    \hfill
    \subfigure[$\mathcal{D}_2$ only model]{
    \includegraphics[width = 0.18 \linewidth]{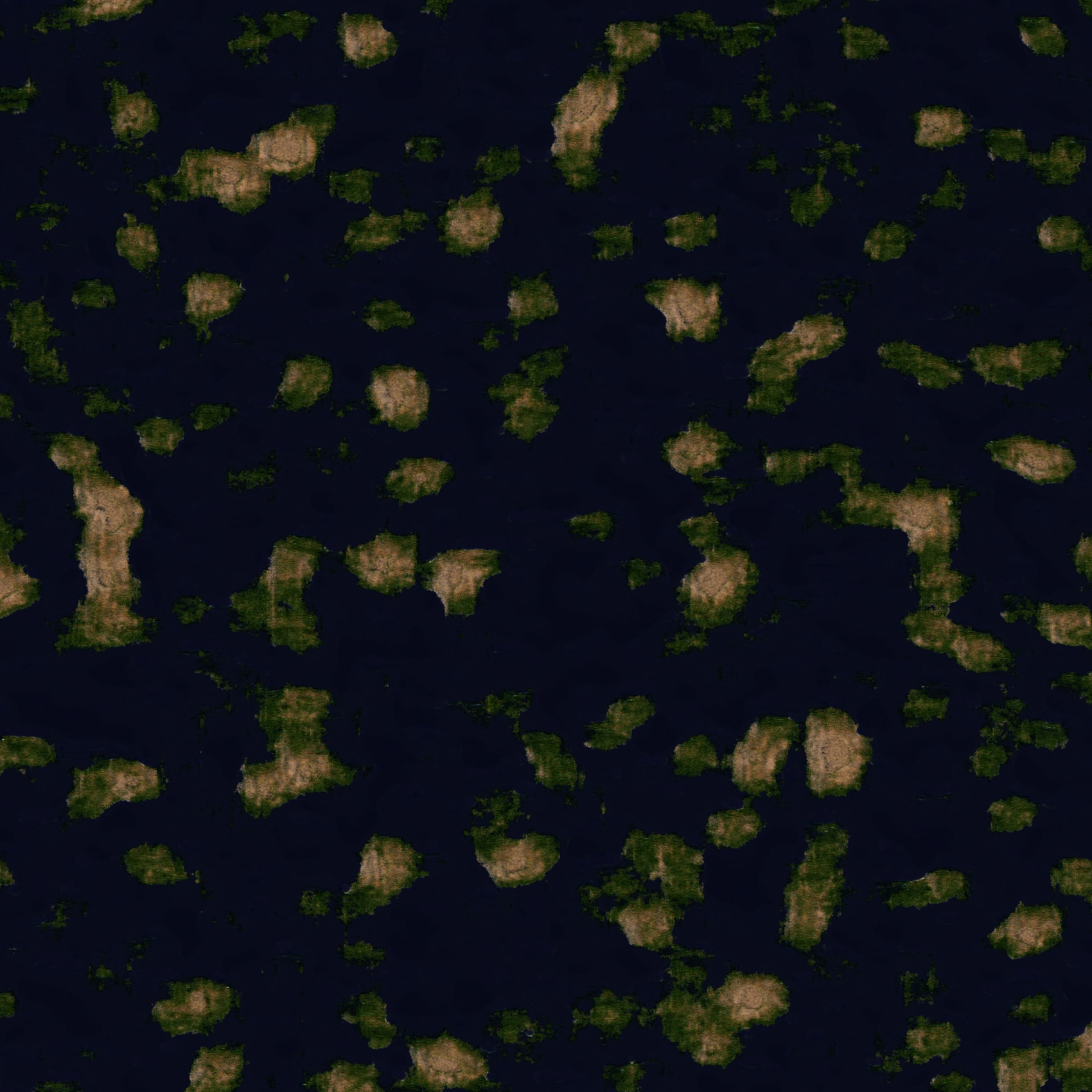}
    }
    \hfill
    \subfigure[$\infty$-GAN sample]{
    \includegraphics[width = 0.18 \linewidth]{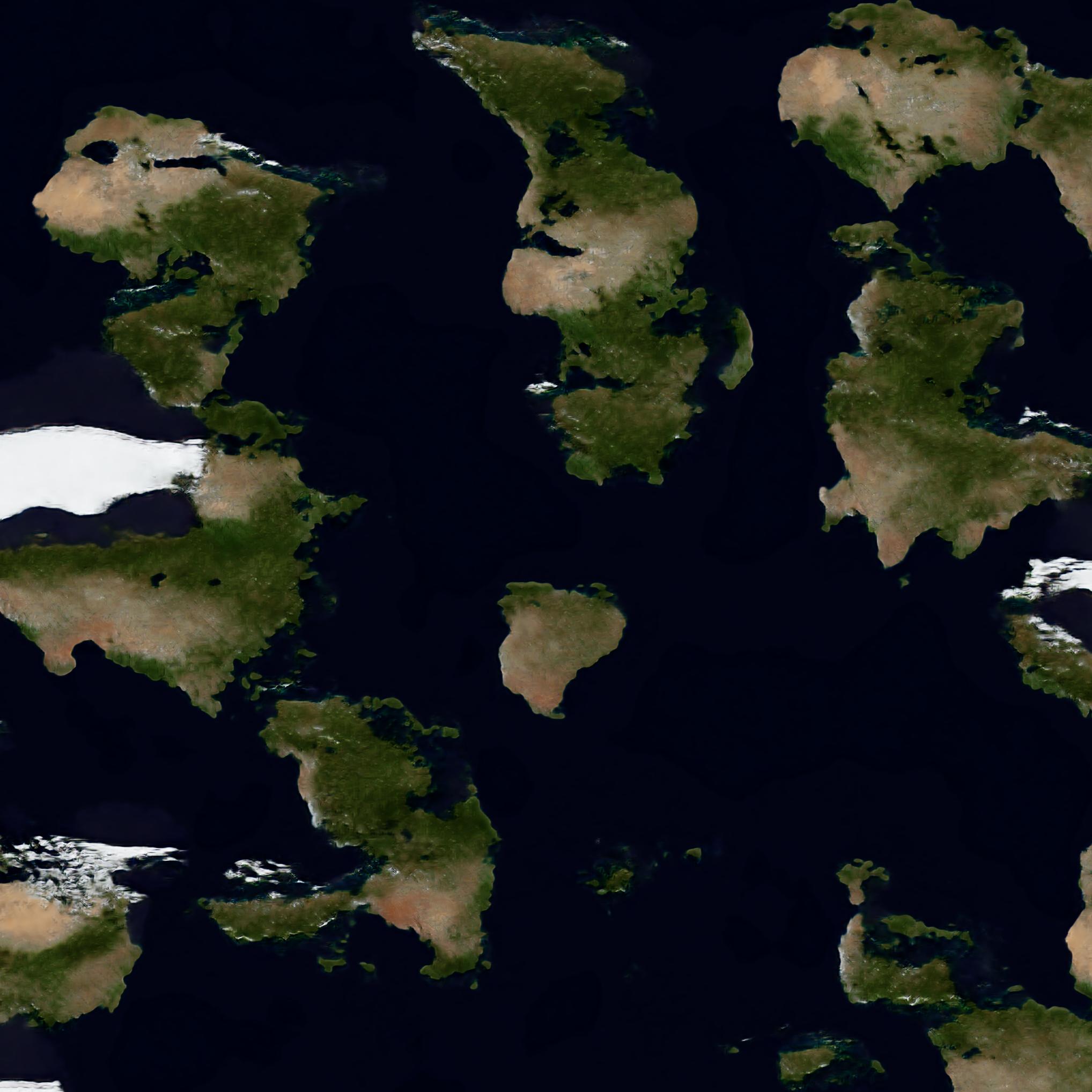}
    }\\
    \vspace{-5pt}
    \subfigure[Data patches]{
    \includegraphics[width = 0.18 \linewidth]{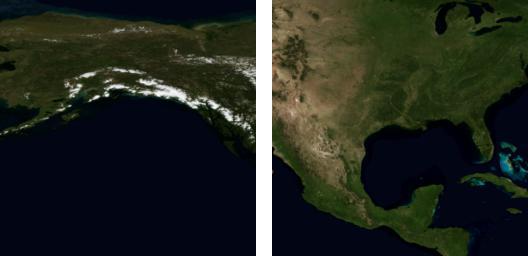}
    }
    \hfill
    \subfigure[Spatial GAN]{
    \includegraphics[width = 0.18 \linewidth]{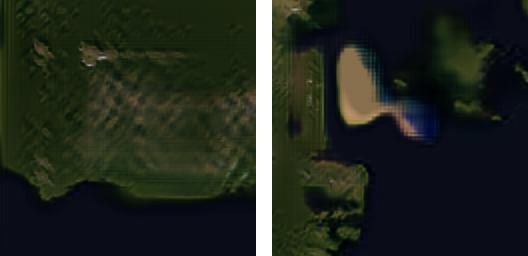}
    }
    \hfill
    \subfigure[PSGAN patches]{
    \includegraphics[width = 0.18 \linewidth]{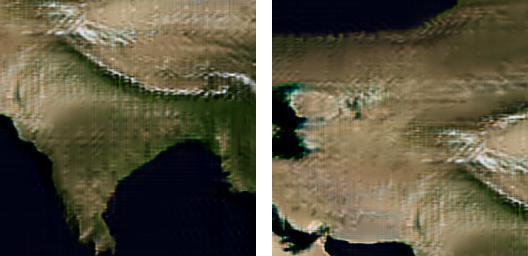}
    }
    \hfill
    \subfigure[$\mathcal{D}_2$ only model ]{
    \includegraphics[width = 0.18 \linewidth]{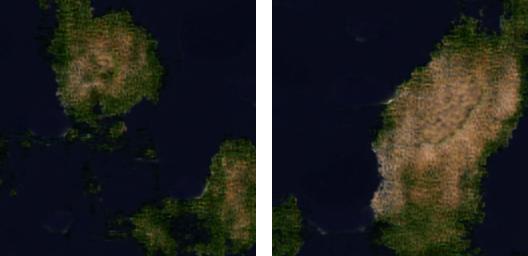}
    }
    \hfill
    \subfigure[$\infty$-GAN]{
    \includegraphics[width = 0.18 \linewidth]{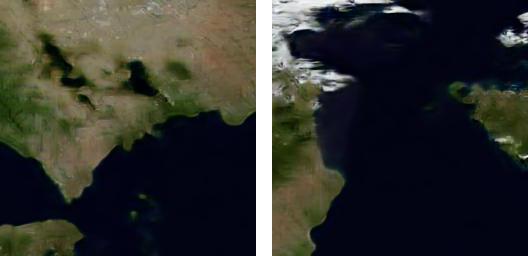}
    }
    \vspace{-5pt}
 	\caption{World map samples. The top row shows samples of size $2048 \times 2048$ (except for the data), and the bottom rows shows data/generated patches of size $256 \times 256$.}
 	\label{fig:world}
 	\vspace{-10pt}
  \end{figure*}

\section{Experiments}

To further demonstrate the theoretical results above, we use the theory to design $\infty$-GAN, which is built on Spatial GAN \citep{jetchev2016texture} and SinGAN \citep{shaham2019singan}. There are two major modifications: (1) $\infty$-GAN's generators are constructed only using the consistent and stationarity preserving transforms; (2) we slightly modify SinGAN's multi-scale architecture to improve computational efficiency by training with a constant image patch size regardless of the output resolution. Details of the $\infty$-GAN architecture are illustrated in Appendix \ref{sec:appendix_infinite_gan}.

We evaluate the $\infty$-GAN model on three datasets of large images: a satellite map of the world, a panoramic image, and texture images. We present the map generation results in the main text and refer the readers to Appendix \ref{sec:appendix_more_exp} for others. Details on data collection, network architectures and hyper-parameter tuning is presented in Appendix \ref{sec:appendix_exp_details}. Generated samples up to size 4096x4096 are provided in \href{https://drive.google.com/drive/folders/14VgV-GMNIfK7qglIUhr-Q96Sd_mnPmit?usp=sharing}{this url}.

\subsection{Evaluation on Satellite World Map}

We consider the task of generating novel and realistic looking satellite world maps. Our $\infty$-GAN model is trained on a $1150 \times 2700$ satellite image taken from the NASA visible Earth project (see Appendix \ref{sec:appendix_exp_details}).  Training patches are obtained by first down-sizing the modified world map image by $1, 2$ and $4\times$, then randomly cropping patches of size $64 \times 64$. This returns three datasets $\mathcal{D}_0, \mathcal{D}_1, \mathcal{D}_2$ containing images patches with increasing resolutions, which are used to train the multi-scale generator at different scale levels.

Here we consider three baseline models for comparison. The first baseline is a Spatial GAN, and the second basesline is a Periodic Spatial GAN (PSGAN) \citep{bergmann2017learning} which is a Spatial GAN with global latent variables and learnable periodic embeddings. Both models has model patch size $256\times 256$ and they are trained on $256\times 256$ crops from the full-scale image. The third baseline is our extension network $G_0(\cdot)$ trained on $64 \times 64$ patches from $\mathcal{D}_2$, which represents a version of our model without the multi-scale architecture. We refer to this baseline as the $\mathcal{D}_2$ only model. We compare the models quantitatively using the Frechet Inception Distance (FID) \citep{heusel2017gans} on the generated outputs with $\mathcal{D}_2$ level resolution, by generating 5 images of size $2048 \times 2048$, sampling 10,000 random crops to construct a set of ``fake image patches'', and then comparing with the random cropped patches from the full-scale data image.
The patch-level FID is computed using different crop sizes; with small patch size the score measures the quality in terms of terrain texture, and with large patch size the score evaluates the high-level structures of the generated images.

The FID scores are reported in Table \ref{tab:world}. We see that both the Spatial GAN and the PSGAN fail to capture the terrain textures from the real world map. They perform slightly better than the $\mathcal{D}_2$ only model in terms of FID scores on larger patches, which is reasonable as the $\mathcal{D}_2$ only model is trained on $64 \times 64$ patches only, therefore the $\mathcal{D}_2$ only model cannot represent the high-level structures in the original world map. 
On the other hand, the full $\infty$-GAN with fixed size multi-scale training significantly outperforms the baselines at all patch levels.

The quantitative results are supported by the visualizations in Figure \ref{fig:world}. Here the $\mathcal{D}_2$ only model fails to capture global statistics and so it only generates small islands and never generates a continent. Although the spatial GAN can generate continents, the image quality is much lower than our model.  PSGAN, on the other hand, fails to generate diverse outputs, showing clear issues of data memorization.
Compared to the baselines, the $\infty$-GAN model can generate novel world maps without just memorizing the data. The generated map contains realistic looking continents, and the image quality is significantly better than the two baselines.

\section{Discussion}

We presented a theory of infinity in generative modelling, based on which deep generative models are capable of generating unbounded images of high resolutions. According to our theory, we provided an example of $\infty$-GAN, whose generator is built with a consistent transform over spatial stochastic processes. This design is crucial for efficient incremental generation compared to inconsistent generators such as the Spatial GAN. The model is trained by a novel multi-scale learning algorithm using images of fixed size, which does not require full-scale high resolution images for learning. Experiments showed that the $\infty$-GAN can capture both the global dependencies shared across patches, as well as the details for each of the high resolution patches, which further demonstrated our theory.

Our work is related to research in generating high fidelity images \citep{karras2018progressive,karras2018style,brock2018large}, stochastic processes \citep[see e.g.][]{huang1984autoregressive,haining1978moving,ripley1981spatial,anselin2013spatial}, super-resolution \citep{ledig2017photo,sonderby2016amortised,wang2018high} and texture synthesis \citep{bergmann2017learning,jetchev2016texture}. These related works are discussed extensively in Appendix \ref{sec:appendix_related_work}.

The closest related work in the context of application is SinGAN \citep{shaham2019singan} which, after trained on a single image, can also generate images of arbitrary size at test time by changing the dimensions of the noise maps. They also conducted an empirical investigation on the artifacts introduced by padding in terms of reduced diversity, and their solution followed the CNN architecture of \citet{ioffe2015batch} and remove padding in the convolutional layer. Our theoretical analysis goes one step further to analyse popular architectural choices in building convolutional generators, and the usage of consistent and stationarity preserving transforms ensures the validity of $\infty$-GAN's generative model. Another difference is that the generators in SinGAN take as input the images of increasing size as training progresses. By contrast, we feed image patches of fixed size at all resolutions into $\infty$-GAN's generators in the hierarchy, which enables more efficient training. 

Future work will consider extending our theory to the non-stationary case. Advanced neural network models will be developed to represent non-stationary information in a spatial stochastic process.

\bibliography{sample}

\begin{thebibliography}{47}
\providecommand{\natexlab}[1]{#1}
\providecommand{\url}[1]{\texttt{#1}}
\expandafter\ifx\csname urlstyle\endcsname\relax
  \providecommand{\doi}[1]{doi: #1}\else
  \providecommand{\doi}{doi: \begingroup \urlstyle{rm}\Url}\fi

\bibitem[Anselin(2013)]{anselin2013spatial}
Anselin, L.
\newblock \emph{Spatial econometrics: methods and models}, volume~4.
\newblock Springer Science \& Business Media, 2013.

\bibitem[Barnes et~al.(2009)Barnes, Shechtman, Finkelstein, and
  Goldman]{barnes2009patchmatch}
Barnes, C., Shechtman, E., Finkelstein, A., and Goldman, D.~B.
\newblock Patchmatch: A randomized correspondence algorithm for structural
  image editing.
\newblock In \emph{ACM Transactions on Graphics (ToG)}, volume~28, pp.\ ~24.
  ACM, 2009.

\bibitem[Beckham \& Pal(2017)Beckham and Pal]{beckham2017step}
Beckham, C. and Pal, C.
\newblock A step towards procedural terrain generation with gans.
\newblock \emph{arXiv preprint arXiv:1707.03383}, 2017.

\bibitem[Bergmann et~al.(2017)Bergmann, Jetchev, and
  Vollgraf]{bergmann2017learning}
Bergmann, U., Jetchev, N., and Vollgraf, R.
\newblock Learning texture manifolds with the periodic spatial gan.
\newblock In \emph{International Conference on Machine Learning}, pp.\
  469--477, 2017.

\bibitem[Brock et~al.(2019)Brock, Donahue, and Simonyan]{brock2018large}
Brock, A., Donahue, J., and Simonyan, K.
\newblock Large scale {GAN} training for high fidelity natural image synthesis.
\newblock In \emph{International Conference on Learning Representations}, 2019.
\newblock URL \url{https://openreview.net/forum?id=B1xsqj09Fm}.

\bibitem[Denton et~al.(2015)Denton, Chintala, Fergus, et~al.]{denton2015deep}
Denton, E.~L., Chintala, S., Fergus, R., et~al.
\newblock Deep generative image models using a laplacian pyramid of adversarial
  networks.
\newblock In \emph{Advances in neural information processing systems}, pp.\
  1486--1494, 2015.

\bibitem[Efros \& Freeman(2001)Efros and Freeman]{efros2001image}
Efros, A.~A. and Freeman, W.~T.
\newblock Image quilting for texture synthesis and transfer.
\newblock In \emph{Proceedings of the 28th annual conference on Computer
  graphics and interactive techniques}, pp.\  341--346. ACM, 2001.

\bibitem[Efros \& Leung(1999)Efros and Leung]{efros1999texture}
Efros, A.~A. and Leung, T.~K.
\newblock Texture synthesis by non-parametric sampling.
\newblock In \emph{Proceedings of the seventh IEEE international conference on
  computer vision}, volume~2, pp.\  1033--1038. IEEE, 1999.

\bibitem[Fournier et~al.(1982)Fournier, Fussell, and
  Carpenter]{fournier1982computer}
Fournier, A., Fussell, D., and Carpenter, L.
\newblock Computer rendering of stochastic models.
\newblock \emph{Communications of the ACM}, 25\penalty0 (6):\penalty0 371--384,
  1982.

\bibitem[Gatys et~al.(2015{\natexlab{a}})Gatys, Ecker, and
  Bethge]{gatys2015texture}
Gatys, L., Ecker, A.~S., and Bethge, M.
\newblock Texture synthesis using convolutional neural networks.
\newblock In \emph{Advances in neural information processing systems}, pp.\
  262--270, 2015{\natexlab{a}}.

\bibitem[Gatys et~al.(2015{\natexlab{b}})Gatys, Ecker, and
  Bethge]{gatys2015neural}
Gatys, L.~A., Ecker, A.~S., and Bethge, M.
\newblock A neural algorithm of artistic style.
\newblock \emph{arXiv preprint arXiv:1508.06576}, 2015{\natexlab{b}}.

\bibitem[Goodfellow et~al.(2014)Goodfellow, Pouget-Abadie, Mirza, Xu,
  Warde-Farley, Ozair, Courville, and Bengio]{goodfellow2014generative}
Goodfellow, I., Pouget-Abadie, J., Mirza, M., Xu, B., Warde-Farley, D., Ozair,
  S., Courville, A., and Bengio, Y.
\newblock Generative adversarial nets.
\newblock In \emph{Advances in neural information processing systems}, pp.\
  2672--2680, 2014.

\bibitem[Gulrajani et~al.(2017)Gulrajani, Ahmed, Arjovsky, Dumoulin, and
  Courville]{gulrajani2017improved}
Gulrajani, I., Ahmed, F., Arjovsky, M., Dumoulin, V., and Courville, A.~C.
\newblock Improved training of wasserstein gans.
\newblock In \emph{Advances in Neural Information Processing Systems}, pp.\
  5767--5777, 2017.

\bibitem[Haining(1978)]{haining1978moving}
Haining, R.
\newblock The moving average model for spatial interaction.
\newblock \emph{Transactions of the Institute of British Geographers}, pp.\
  202--225, 1978.

\bibitem[Heusel et~al.(2017)Heusel, Ramsauer, Unterthiner, Nessler, and
  Hochreiter]{heusel2017gans}
Heusel, M., Ramsauer, H., Unterthiner, T., Nessler, B., and Hochreiter, S.
\newblock Gans trained by a two time-scale update rule converge to a local nash
  equilibrium.
\newblock In \emph{Advances in Neural Information Processing Systems}, pp.\
  6626--6637, 2017.

\bibitem[Huang et~al.(2018)Huang, He, Sun, Tan, et~al.]{huang2018introvae}
Huang, H., He, R., Sun, Z., Tan, T., et~al.
\newblock Introvae: Introspective variational autoencoders for photographic
  image synthesis.
\newblock In \emph{Advances in Neural Information Processing Systems}, pp.\
  52--63, 2018.

\bibitem[Huang(1984)]{huang1984autoregressive}
Huang, J.~S.
\newblock The autoregressive moving average model for spatial analysis.
\newblock \emph{Australian Journal of Statistics}, 26\penalty0 (2):\penalty0
  169--178, 1984.

\bibitem[Ioffe \& Szegedy(2015)Ioffe and Szegedy]{ioffe2015batch}
Ioffe, S. and Szegedy, C.
\newblock Batch normalization: Accelerating deep network training by reducing
  internal covariate shift.
\newblock In \emph{International Conference on Machine Learning}, pp.\
  448--456, 2015.

\bibitem[Isola et~al.(2017)Isola, Zhu, Zhou, and Efros]{isola2017image}
Isola, P., Zhu, J.-Y., Zhou, T., and Efros, A.~A.
\newblock Image-to-image translation with conditional adversarial networks.
\newblock In \emph{Proceedings of the IEEE conference on computer vision and
  pattern recognition}, pp.\  1125--1134, 2017.

\bibitem[Jetchev et~al.(2016)Jetchev, Bergmann, and
  Vollgraf]{jetchev2016texture}
Jetchev, N., Bergmann, U., and Vollgraf, R.
\newblock Texture synthesis with spatial generative adversarial networks.
\newblock \emph{arXiv preprint arXiv:1611.08207}, 2016.

\bibitem[Johnson et~al.(2016)Johnson, Alahi, and
  Fei-Fei]{johnson2016perceptual}
Johnson, J., Alahi, A., and Fei-Fei, L.
\newblock Perceptual losses for real-time style transfer and super-resolution.
\newblock In \emph{European conference on computer vision}, pp.\  694--711.
  Springer, 2016.

\bibitem[Karras et~al.(2018{\natexlab{a}})Karras, Aila, Laine, and
  Lehtinen]{karras2018progressive}
Karras, T., Aila, T., Laine, S., and Lehtinen, J.
\newblock Progressive growing of {GAN}s for improved quality, stability, and
  variation.
\newblock In \emph{International Conference on Learning Representations},
  2018{\natexlab{a}}.
\newblock URL \url{https://openreview.net/forum?id=Hk99zCeAb}.

\bibitem[Karras et~al.(2018{\natexlab{b}})Karras, Laine, and
  Aila]{karras2018style}
Karras, T., Laine, S., and Aila, T.
\newblock A style-based generator architecture for generative adversarial
  networks.
\newblock \emph{arXiv preprint arXiv:1812.04948}, 2018{\natexlab{b}}.

\bibitem[Kingma \& Dhariwal(2018)Kingma and Dhariwal]{kingma2018glow}
Kingma, D.~P. and Dhariwal, P.
\newblock Glow: Generative flow with invertible 1x1 convolutions.
\newblock In \emph{Advances in Neural Information Processing Systems}, pp.\
  10215--10224, 2018.

\bibitem[Komura \& Ishikawa(2018)Komura and Ishikawa]{komura2018machine}
Komura, D. and Ishikawa, S.
\newblock Machine learning methods for histopathological image analysis.
\newblock \emph{Computational and structural biotechnology journal},
  16:\penalty0 34--42, 2018.

\bibitem[Kwatra et~al.(2003)Kwatra, Sch{\"o}dl, Essa, Turk, and
  Bobick]{kwatra2003graphcut}
Kwatra, V., Sch{\"o}dl, A., Essa, I., Turk, G., and Bobick, A.
\newblock Graphcut textures: image and video synthesis using graph cuts.
\newblock \emph{ACM Transactions on Graphics (ToG)}, 22\penalty0 (3):\penalty0
  277--286, 2003.

\bibitem[Ledig et~al.(2017)Ledig, Theis, Husz{\'a}r, Caballero, Cunningham,
  Acosta, Aitken, Tejani, Totz, Wang, et~al.]{ledig2017photo}
Ledig, C., Theis, L., Husz{\'a}r, F., Caballero, J., Cunningham, A., Acosta,
  A., Aitken, A., Tejani, A., Totz, J., Wang, Z., et~al.
\newblock Photo-realistic single image super-resolution using a generative
  adversarial network.
\newblock In \emph{Proceedings of the IEEE conference on computer vision and
  pattern recognition}, pp.\  4681--4690, 2017.

\bibitem[Li \& Wand(2016)Li and Wand]{li2016precomputed}
Li, C. and Wand, M.
\newblock Precomputed real-time texture synthesis with markovian generative
  adversarial networks.
\newblock In \emph{European Conference on Computer Vision}, pp.\  702--716.
  Springer, 2016.

\bibitem[Ma et~al.(2018)Ma, Li, and Hern{\'a}ndez-Lobato]{ma2018variational}
Ma, C., Li, Y., and Hern{\'a}ndez-Lobato, J.~M.
\newblock Variational implicit processes.
\newblock \emph{arXiv preprint arXiv:1806.02390}, 2018.

\bibitem[Menick \& Kalchbrenner(2019)Menick and
  Kalchbrenner]{menick2018generating}
Menick, J. and Kalchbrenner, N.
\newblock {GENERATING} {HIGH} {FIDELITY} {IMAGES} {WITH} {SUBSCALE} {PIXEL}
  {NETWORKS} {AND} {MULTIDIMENSIONAL} {UPSCALING}.
\newblock In \emph{International Conference on Learning Representations}, 2019.
\newblock URL \url{https://openreview.net/forum?id=HylzTiC5Km}.

\bibitem[Musgrave et~al.(1989)Musgrave, Kolb, and Mace]{musgrave1989synthesis}
Musgrave, F.~K., Kolb, C.~E., and Mace, R.~S.
\newblock The synthesis and rendering of eroded fractal terrains.
\newblock In \emph{ACM Siggraph Computer Graphics}, volume~23, pp.\  41--50.
  ACM, 1989.

\bibitem[Olsen(2004)]{olsen2004realtime}
Olsen, J.
\newblock Realtime procedural terrain generation.
\newblock \emph{Technical Report, University of Southern Denmark}, 2004.

\bibitem[Perlin(1985)]{perlin1985noise}
Perlin, K.
\newblock An image synthesizer.
\newblock \emph{SIGGRAPH Comput. Graph.}, 19\penalty0 (3):\penalty0 287--296,
  July 1985.
\newblock ISSN 0097-8930.
\newblock \doi{10.1145/325165.325247}.
\newblock URL \url{http://doi.acm.org/10.1145/325165.325247}.

\bibitem[Perlin(2002)]{perlin2002improving}
Perlin, K.
\newblock Improving noise.
\newblock In \emph{ACM transactions on graphics (TOG)}, volume~21, pp.\
  681--682. ACM, 2002.

\bibitem[Radford et~al.(2015)Radford, Metz, and
  Chintala]{radford2015unsupervised}
Radford, A., Metz, L., and Chintala, S.
\newblock Unsupervised representation learning with deep convolutional
  generative adversarial networks.
\newblock \emph{arXiv preprint arXiv:1511.06434}, 2015.

\bibitem[Reed et~al.(2017)Reed, Oord, Kalchbrenner, Colmenarejo, Wang, Chen,
  Belov, and Freitas]{reed2017parallel}
Reed, S., Oord, A., Kalchbrenner, N., Colmenarejo, S.~G., Wang, Z., Chen, Y.,
  Belov, D., and Freitas, N.
\newblock Parallel multiscale autoregressive density estimation.
\newblock In \emph{International Conference on Machine Learning}, pp.\
  2912--2921, 2017.

\bibitem[Ripley()]{ripley1981spatial}
Ripley, B.~D.
\newblock \emph{Spatial statistics}, volume~24.
\newblock Wiley Online Library.

\bibitem[Robinson(1977)]{robinson1977estimation}
Robinson, P.
\newblock The estimation of a nonlinear moving average model.
\newblock \emph{Stochastic processes and their applications}, 5\penalty0
  (1):\penalty0 81--90, 1977.

\bibitem[Shaham et~al.(2019)Shaham, Dekel, and Michaeli]{shaham2019singan}
Shaham, T.~R., Dekel, T., and Michaeli, T.
\newblock Singan: Learning a generative model from a single natural image.
\newblock In \emph{Proceedings of the IEEE International Conference on Computer
  Vision}, pp.\  4570--4580, 2019.

\bibitem[S{\o}nderby et~al.(2016)S{\o}nderby, Caballero, Theis, Shi, and
  Husz{\'a}r]{sonderby2016amortised}
S{\o}nderby, C.~K., Caballero, J., Theis, L., Shi, W., and Husz{\'a}r, F.
\newblock Amortised map inference for image super-resolution.
\newblock \emph{arXiv preprint arXiv:1610.04490}, 2016.

\bibitem[Ulyanov et~al.(2016{\natexlab{a}})Ulyanov, Lebedev, Lempitsky,
  et~al.]{ulyanov2016texture}
Ulyanov, D., Lebedev, V., Lempitsky, V., et~al.
\newblock Texture networks: Feed-forward synthesis of textures and stylized
  images.
\newblock In \emph{International Conference on Machine Learning}, pp.\
  1349--1357, 2016{\natexlab{a}}.

\bibitem[Ulyanov et~al.(2016{\natexlab{b}})Ulyanov, Vedaldi, and
  Lempitsky]{ulyanov2016instance}
Ulyanov, D., Vedaldi, A., and Lempitsky, V.
\newblock Instance normalization: The missing ingredient for fast stylization.
\newblock \emph{arXiv preprint arXiv:1607.08022}, 2016{\natexlab{b}}.

\bibitem[Ulyanov et~al.(2017)Ulyanov, Vedaldi, and
  Lempitsky]{ulyanov2017improved}
Ulyanov, D., Vedaldi, A., and Lempitsky, V.
\newblock Improved texture networks: Maximizing quality and diversity in
  feed-forward stylization and texture synthesis.
\newblock In \emph{Proceedings of the IEEE Conference on Computer Vision and
  Pattern Recognition}, pp.\  6924--6932, 2017.

\bibitem[Wang et~al.(2018)Wang, Liu, Zhu, Tao, Kautz, and
  Catanzaro]{wang2018high}
Wang, T.-C., Liu, M.-Y., Zhu, J.-Y., Tao, A., Kautz, J., and Catanzaro, B.
\newblock High-resolution image synthesis and semantic manipulation with
  conditional gans.
\newblock In \emph{Proceedings of the IEEE Conference on Computer Vision and
  Pattern Recognition}, pp.\  8798--8807, 2018.

\bibitem[Wei \& Levoy(2000)Wei and Levoy]{wei2000fast}
Wei, L.-Y. and Levoy, M.
\newblock Fast texture synthesis using tree-structured vector quantization.
\newblock In \emph{Proceedings of the 27th annual conference on Computer
  graphics and interactive techniques}, pp.\  479--488. ACM
  Press/Addison-Wesley Publishing Co., 2000.

\bibitem[Zhang et~al.(2017)Zhang, Xu, Li, Zhang, Wang, Huang, and
  Metaxas]{zhang2017stackgan}
Zhang, H., Xu, T., Li, H., Zhang, S., Wang, X., Huang, X., and Metaxas, D.~N.
\newblock Stackgan: Text to photo-realistic image synthesis with stacked
  generative adversarial networks.
\newblock In \emph{Proceedings of the IEEE International Conference on Computer
  Vision}, pp.\  5907--5915, 2017.

\bibitem[Zhang et~al.(2018)Zhang, Xu, Li, Zhang, Wang, Huang, and
  Metaxas]{zhang2018stackgan++}
Zhang, H., Xu, T., Li, H., Zhang, S., Wang, X., Huang, X., and Metaxas, D.~N.
\newblock Stackgan++: Realistic image synthesis with stacked generative
  adversarial networks.
\newblock \emph{IEEE Transactions on Pattern Analysis and Machine
  Intelligence}, 2018.

\end{thebibliography}
\bibliographystyle{icml2020}

\clearpage
\appendix
\onecolumn

\section{Redundancy of incremental generation with inconsistent architectures}
\label{sec:app_redunndancy}

We assume an inconsistent generator $G(\cdot)$ contains $K$ blocks of $\{ \text{nearest up-sampling}, \text{conv } 3 \times 3 \}$ layers. Here the up-sampling scale is $U=2$ and the $3 \times 3$ convolution uses stride 1 and zero padding. Therefore given $\bm{z}$ input of spatial size $(N, N)$, the output spatial size is $(2^K N, 2^K N)$. Assume the computational budget allows generations of patches of size at most $S \times S$. This means $K$ is at most $\lfloor \log_2 S \rfloor$. 

We define \emph{inconsistent pixels} in an image patch as the pixels that are dependent on at least one of the zeros padded in the convolutional layers. Other pixels in that patch are \emph{consistently generated}, and in incremental generation, we only maintain consistent pixels and throw away the inconsistent ones. 

Recall the notation for the input index set as $J = \{a : b - 1 \}^2$ with $\{a: b - 1 \} = \{a, a+1, ..., b - 1 \} $. For convenience in the rest of this section we also write $[a : b) = \{a : b - 1 \} = \{a, a + 1, ..., b - 1 \}$ using python's indexing method. 

\begin{lemma}
\label{lemma:upsampling_output_index}
If the input index set is $J = [a : b )^2$, then the output of the generator $\bm{x}(I^K) = G(\bm{z}(J))$ has index set $I^K = [ 2^Ka : 2^Kb )^2$, and only the pixels in $\bm{x}(\tilde{I}^K)$ with $\tilde{I}^K = [ 2^Ka + 2^K - 1 : 2^K b - 2^K + 1 )^2$ are consistently generated.
\end{lemma}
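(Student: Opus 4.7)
The natural approach is induction on $K$, the number of $\{\text{up-sample}, \text{conv }3\times 3\}$ blocks, maintaining the invariant that after $k$ blocks applied to input index set $J = [a:b)^2$, the output index set is $I^k = [2^k a : 2^k b)^2$ and the consistently generated subset is $\tilde{I}^k = [2^k a + 2^k - 1 : 2^k b - 2^k + 1)^2$. Since both operations act identically (and independently) along each spatial axis, it suffices to carry out the argument in one dimension and then take the Cartesian product. The base case $k=0$ is immediate: no layers have been applied, so $I^0 = J = [a:b)$, $\tilde{I}^0 = [a + 1 - 1 : b - 1 + 1) = [a:b)$, and no pixel depends on any padded zero.

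For the inductive step, I would analyse the two sub-layers in sequence. First, nearest up-sampling with scale $2$ maps a pixel at index $i$ in the upsampled tensor to source index $\lfloor i/2 \rfloor$, so an input index set $[c:d)$ becomes $[2c:2d)$ and an input consistent set $[p:q)$ becomes $[2p:2q)$ (upsampling introduces no padded entries, so consistency is simply inherited from the parent pixel). Applying this to the inductive hypothesis gives pre-conv index set $[2^{k+1} a : 2^{k+1} b)$ and pre-conv consistent set $[2^{k+1} a + 2^{k+1} - 2 : 2^{k+1} b - 2^{k+1} + 2)$. Next, the $3\times 3$ convolution with stride $1$ and zero padding preserves the index set, giving $I^{k+1} = [2^{k+1} a : 2^{k+1} b)$ as claimed. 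A post-conv pixel at index $i$ is consistent iff its $3\times 3$ receptive field $\{i-1, i, i+1\}$ lies entirely inside the pre-conv image (so no padded zero is touched) and entirely inside the pre-conv consistent set (so no inconsistent ancestor is touched). The second condition is the binding one for $k \geq 0$: it requires $i - 1 \geq 2^{k+1}a + 2^{k+1} - 2$ and $i + 1 \leq 2^{k+1}b - 2^{k+1} + 1$, i.e.\ $i \in [2^{k+1}a + 2^{k+1} - 1 : 2^{k+1}b - 2^{k+1} + 1)$, which matches $\tilde{I}^{k+1}$.

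Lifting back to two dimensions, the image index sets and consistent sets are Cartesian products of the one-dimensional analogues because the up-sampling and convolution kernels are separable along rows and columns and the consistency condition factorises. Completing the induction yields the statement.

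The only real subtlety is being explicit about the nearest-upsampling indexing convention (and verifying that the boundary-padding constraint is slack relative to the consistency constraint once $k \geq 1$, so that a single clean formula for $\tilde{I}^{k+1}$ emerges). Once that bookkeeping is fixed, the argument reduces to a straightforward interval-shrinkage calculation at each layer.
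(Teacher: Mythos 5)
Your proposal is correct and follows essentially the same route as the paper: an induction over the $K$ blocks in which nearest up-sampling doubles indices and the zero-padded $3\times 3$ convolution shrinks the consistent region by one pixel per side, yielding the recursion that gives edge width $2^K-1$. The only cosmetic difference is that you track the consistent interval directly (and fold the output-index claim into the same induction), whereas the paper tracks the inconsistent boundary width via $w \mapsto 2w+1$ and handles the output index set by following the corner pixels; the content is identical.
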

\begin{proof}
It is clear that a nearest up-sampling layer with scale 2 maps a pixel with index $(i, j)$ to a $2 \times 2$ patch with index set $\{ 2i : 2i + 1 \} \times \{2j : 2j + 1 \}$. Also notice that convolution with zero padding does not change the spatial dimension. Therefore after $K$ blocks of transforms, the upper-left corner pixel in latent space $\bm{z}(a, a)$ gets mapped to the upper-left corner of the output image patch with the upper-left corner pixel $\bm{x}(2^Ka, 2^Ka)$. At the same time, the bottom-right corner pixel in latent space $\bm{z}(b-1, b-1)$ gets mapped to the bottom-right corner of the output image patch with the bottom-right corner pixel $\bm{x}(2^Kb - 1, 2^Kb - 1)$. 
Therefore the output index set is $I^K = [ 2^Ka: 2^Kb )^2$. 

We prove the second part of the lemma by induction. Assume at the $(k-1)^{th}$ block the output patch has index set $[\alpha : \beta )^2$, and it has inconsistent feature variables around the edge with width $w$. This means after the $k^{th}$ up-sampling, the set of inconsistent feature variables are also around the edge with width $2w$. Then after the $3 \times 3$ convolution, the output feature variables with index $i \in \{2 \alpha + 2w + 1, 2 \beta - 2w - 1\}$ and/or $j \in \{2 \alpha + 2w + 1, 2 \beta - 2w - 1\}$ are dependant on the inconsistent feature variables in the last layer thereby inconsistent as well. Therefore at the $k^{th}$ block the output patch has index set $[2 \alpha : 2 \beta )^2$, and the inconsistent feature variables are around the edge with width $2w + 1$. Since the input of the $1^{st}$ block are all consistent latent variables, the image ouput $\bm{x}(I^K)$ has $2^K - 1$ inconsistent pixels along each side of the image. Therefore the consistent pixels are $\bm{x}(\tilde{I}^K)$ with $\tilde{I}^K = [ 2^Ka + 2^K - 1: 2^K b - 2^K + 1 )^2$.
\end{proof}

In incremental generation of neighbouring patches $\bm{x}(I^K_1)$ and $\bm{x}(I^K_2)$, the corresponding index sets $J_1$ and $J_2$ in $\bm{z}$ need to have overlap. Otherwise the patches are independent to each other, which is undesirable. However, if the overlap $J_1 \cap J_2$ is too large, then many consistent pixels in the image space get wasted. So in the following lemma we calculate the overlap size which achieves the lowest redundancy of generated pixels.

\begin{lemma}
\label{lemma:overlap_z_space}
To achieve consistent incremental generation of two neighbouring patches, $J_1$ and $J_2$ only needs to overlap by 2 columns or 2 rows in $\bm{z}$ space, when $K \geq 2$ blocks are in use. In this case the consistent output patches $\bm{x}(\tilde{I}^K_1)$ and $\bm{x}(\tilde{I}^K_2)$ have overlapping pixels of 2 columns or 2 rows.
\end{lemma}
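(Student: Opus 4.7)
The plan is to apply Lemma \ref{lemma:upsampling_output_index} to each patch separately and then determine the smallest overlap in the latent index set $J$ that makes the two sets of \emph{consistently generated} output pixels abut (or overlap) rather than leave a gap. Throughout, by symmetry it suffices to treat the case of two horizontally adjacent patches, so I would fix the row range and consider column overlap only.

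First, write $J_1 = [a_1:b_1) \times R$ and $J_2 = [a_2:b_2) \times R$ with $a_1 \le a_2 < b_1 \le b_2$, so that the latent overlap consists of $b_1 - a_2$ columns. By Lemma \ref{lemma:upsampling_output_index}, the consistent columns of $G(\bm{z}(J_1))$ are indexed by $[2^K a_1 + 2^K - 1 : 2^K b_1 - 2^K + 1)$, and the consistent columns of $G(\bm{z}(J_2))$ are indexed by $[2^K a_2 + 2^K - 1 : 2^K b_2 - 2^K + 1)$. For incremental generation to cover the horizontal interval between the two patches without any missing column, I need the right endpoint of the first consistent region to reach at least the left endpoint of the second, i.e.
\begin{equation*}
2^K b_1 - 2^K + 1 \;\ge\; 2^K a_2 + 2^K - 1,
\end{equation*}
which simplifies to $b_1 - a_2 \ge 2 - 2^{1-K}$. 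Since $b_1 - a_2 \in \mathbb{Z}$ and $K \ge 2$ gives $2^{1-K} \le 1/2$, the smallest admissible integer overlap is $b_1 - a_2 = 2$, proving the first claim.

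Next, I would plug $b_1 - a_2 = 2$ back into the two consistent-index sets to read off the overlap in image space. With $a_2 = b_1 - 2$, the left end of the second consistent region is $2^K(b_1 - 2) + 2^K - 1 = 2^K b_1 - 2^K - 1$, while the right end (exclusive) of the first consistent region is $2^K b_1 - 2^K + 1$. Their intersection is therefore $[2^K b_1 - 2^K - 1 : 2^K b_1 - 2^K + 1)$, which contains exactly 2 columns. The analogous calculation for vertically adjacent patches gives 2 rows, finishing the lemma.

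The only real step that needs care is the off-by-one bookkeeping with the half-open interval convention $[a:b)$ used in Lemma \ref{lemma:upsampling_output_index}; beyond that the argument is a direct computation. I would also briefly justify why $b_1 - a_2 = 1$ is insufficient (it leaves a gap of $2^K - 2 \ge 2$ columns when $K \ge 2$), which confirms the minimality claim and explains the role of the hypothesis $K \ge 2$.
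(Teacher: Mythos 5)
Your proposal is correct and follows essentially the same route as the paper: it applies Lemma \ref{lemma:upsampling_output_index} to both patches, imposes the adjacency-or-overlap condition $2^K b_1 - 2^K + 1 \geq 2^K a_2 + 2^K - 1$ to get $b_1 - a_2 \geq 2 - 2^{1-K}$ and hence an integer overlap of at least $2$ for $K \geq 2$, and then substitutes $a_2 = b_1 - 2$ to find exactly $2$ overlapping consistent columns/rows. Your added remark that an overlap of $1$ leaves a gap of $2^K - 2$ columns is a nice explicit check of minimality that the paper only implies.
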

\begin{proof}
We prove this overlap result for row overlapping, column overlapping result can be proved accordingly. We assume $J_1 = [a_1 : b_1 ) \times [\alpha : \beta )$ and $J_1 = [a_2 : b_2 ) \times [\alpha : \beta )$. Assume $a_1 < a_2 < b_1 < b_2$, it is equivalent to prove $b_1 - a_2 \geq 2$ (as now the overlapping rows in $\bm{z}$ space has row indices $\{a_2, a_2 + 1, ... b_1 - 1 \}$).

From Lemma \ref{lemma:upsampling_output_index} we see that the output patches $\bm{x}(I^K_1)$ and $\bm{x}(I^K_2)$ has index sets $I^K_1 = [2^K a_1 : 2^K b_1 ) \times [2^K \alpha : 2^K \beta )$ and $I^K_2 = [2^K a_2 : 2^K b_2 ) \times [ 2^K \alpha : 2^K \beta )$. However since the inconsistent pixels need to be removed, the resulting consistent patches has index sets $\tilde{I}^K_1 = [ 2^K a_1 + 2^K - 1 : 2^K b_1 - 2^K + 1 ) \times [2^K \alpha + 2^K - 1 : 2^K \beta - 2^K + 1 )$ and $\tilde{I}^K_2 = [2^K a_2 + 2^K - 1 : 2^K b_2 - 2^K + 1 ) \times [2^K \alpha + 2^K - 1 : 2^K \beta - 2^K + 1 )$. To make the two consistent patches $\bm{x}(\tilde{I}^K_1)$ and $\bm{x}(\tilde{I}^K_2)$ overlap or at least adjacent, it requires $2^K b_1 - 2^K + 1 \geq 2^K a_2 + 2^K - 1$, which means $b_1 \geq a_2 + 2 - 1 / 2^{K-1}$. As $K \geq 1$ and $b_1, a_2 \in \mathbb{Z}$, this means $b_1 - a_2 \geq 2$ when $K \geq 2$ and $b_1 - a_2 \geq 1$ when $K=1$. 

To prove the second part of the lemma, we set $a_2 = b_1 - 2$. We also assume $b_1 - a_1 = b_2 - a_2 = \alpha - \beta = \lfloor \log_2 S \rfloor$ in order to minimize the number of inconsistent pixels to be discarded. This means the consistent output patches $\bm{x}(\tilde{I}^K_1)$ and $\bm{x}(\tilde{I}^K_2)$ have overlapping pixels, i.e.~$\tilde{I}^K_1 \cap \tilde{I}^K_2 = [2^K a_2 + 2^K - 1 : 2^K b_1 - 2^K + 1) \times [2^K \alpha + 2^K - 1 : 2^K \beta - 2^K + 1 )$. Note that $2^K b_1 - 2^K + 1 - (2^K a_2 + 2^K - 1) = 2$ since $b_1 - a_2 = 2$. Therefore the two consistent output patches have overlapping pixels of 2 rows.
\end{proof}

Lemma \ref{lemma:overlap_z_space} indicates that the two rows of consistent pixels $\bm{x}([2^K a_2 + 2^K - 1 : 2^K b_1 - 2^K + 1) \times [2^K \alpha + 2^K - 1 : 2^K \beta - 2^K + 1 ))$ is generated twice, and if $\bm{x}(I^K_2)$ is generated after $\bm{x}(I^K_1)$, then we need to discard these two rows of pixels in $\bm{x}(\tilde{I}^K_2)$. So in sum, for an output patch $\bm{x}(I^K)$ that is one of the interior patches in the final huge image (which is stitched from $M \times M$ patches), $2(2^K - 1)$ rows/columns of inconsistent pixels as well as 2 rows/columns of consistent pixels need to be discarded. As the final stitched image has $4M - 4$ boundary patches but $(M-2)^2$ interior patches, it means when $M \rightarrow +\infty$, the redundancy of pixel generation is dominated by the redundancy in generating the interior patches. Therefore we can prove Theorem \ref{thm:redundancy} presented in the main text.
\begin{proof}
As explained above it is sufficient to compute the fraction of discarded pixels in an interior patch. Recall the input $\bm{z}(J)$ has spatial size $(N, N)$. Then in $\bm{x}(I^K)$ which has spatial size $(2^K N, 2^K N)$, $2^{K+1}$ rows and $2^{K+1}$ columns of pixels are discarded. Therefore the percentage of discarded pixels is
$$ 1 - \frac{(2^K N - 2^{K+1})^2}{2^{2K} N^2} = 1 - \frac{(N - 2)^2}{N^2} = \frac{4}{N} - \frac{4}{N^2}. $$
Now recall that $2^K N \leq S$, meaning that $N \leq \lfloor \frac{S}{2^K} \rfloor$. This means the pixel generation redundancy is of percentage at least $4 [ (\lfloor \frac{S}{2^K} \rfloor )^{-1} - (\lfloor \frac{S}{2^K} \rfloor )^{-2} ]$.
\end{proof}

One can reduce the number of redundantly generated pixels by decreasing $K$, the number of $\{ \text{nearest up-sampling}, \text{conv } 3 \times 3 \}$ blocks. However when $K$ is small the network will fail to capture the global information presented in a big image patch. On the other hand, with large $K$, although the generator gets very flexible, in incremental generation this flexibility is severely damaged as a large fraction of pixels in patches are removed. As the overlap in $\bm{z}$ space is 2, this also means $N \geq 3$ and the number of blocks $K$ is restricted to have $K \leq \lceil \log_2 \frac{S}{3} \rceil$. Even in this case the fraction of removed pixels, when $M \rightarrow +\infty$, goes to $8/9$ which is a very significant value.


\section{Proofs for consistency and stationarity preserving results}

\subsection{Proofs for consistency}
\label{sec:consistency_proofs}

We wish to establish the consistency results for operators defined by a convolutional neural network. It it sufficient to prove that each component of this convolutional neural network, including convolution, pixel-wise non-linearity and up-sampling (bilinear or nearest interpolation), is a valid operator on stochastic processes, i.e.~the output of the operation is also a stochastic process. We follow \citet{ma2018variational} to construct our proofs, before that we explain in below the proof ideas of the consistency results. The main techniques are the Kolmogorov extension theorem and the Karhunen-Loeve expansion, and we use either of which is more convenient than the other. 

We assume w.l.o.g.~the input stochastic process as $\bm{z}(\cdot, \cdot) \sim SP(0, \mathcal{K})$ a centered discrete stochastic process on $\mathcal{L}^2(\mathbb{Z}^2)$, i.e.~$C= 1$ and $\bm{z}(i, j) \in \mathbb{R}$. 
Define the collection of random variables $\bm{x}(\cdot, \cdot) = \{ \bm{x}(i, j) = \mathcal{O}(\bm{z}(\cdot, \cdot))(i, j) \ | \ (i, j) \in \mathbb{Z}^2 \}$ as the outcome of a given operator $\mathcal{O}$. We say the operator $\mathcal{O}$ is \emph{consistent} if $\bm{x}(\cdot, \cdot)$ is also a stochastic process on $\mathcal{L}^2(\mathbb{Z}^2)$.

Also denote the distribution of a finite subset $\bm{x}(I) = \{ \bm{x}(i, j) \ | \ (i, j) \in I \} \subset \bm{x}(\cdot, \cdot)$ with index set $I \subset \mathbb{Z}^2$ as $P_{X, I}(\bm{s})$. The Kolmogorov extension theorem states the consistency conditions of a stochastic process as:
\begin{itemize}
    \item[1.] Marginalization consistency: for any finite subset $I \subset \mathbb{Z}^2$, and any $I' \subset I$, $$P_{X, I'}(\bm{x}(I')) = P_{X, I}(\bm{x}(I')) := \int P_{X, I}(\bm{x}(I)) d\bm{x}(I \backslash I').$$
    \item[2.] Permutation invance: for any permutation $I''$ of $I$, $$P_{X, I''}(\bm{x}(I'')) = P_{X, I}(\bm{x}(I)) .$$
\end{itemize}
Therefore given an operator $\mathcal{O}$, one can check the two conditions for $\bm{x}(\cdot, \cdot)$ to validate whether the output is a stochastic process. When $\bm{x}(\cdot, \cdot)$ is a stochastic process, we may omit the subscript $I$ and write the distribution as $P_{X}(\bm{x}(I))$.

One can also use the he Karhunen-Loeve expansion (K-L expansion) theorem to prove the existence of a stochastic process. This input stochastic process can be K-L expanded as a stochastic infinite series
\begin{equation}
\bm{z}(i, j) = \sum_{m}^{\infty} \sqrt{\lambda_m} Z_m \phi_{m}(i, j), \quad \sum_{m}^{\infty} \lambda_m < + \infty,
\label{eq:kl_expansion}
\end{equation}
where $\{ Z_m \}$ is a collection of zero-mean, unit-variance, uncorrelated variables, and $\{ \phi_m(\cdot, \cdot) \}$ is an orthonomal basis of $\mathcal{L}^2(\mathbb{Z}^2)$.
Our proofs use the K-L expansion of $\bm{z}(\cdot, \cdot)$ and establish the convergence of the operator applied on the stochastic infinite series in $\mathcal{L}^2(\mathbb{Z}^2)$. Since in this case $\bm{x}(\cdot, \cdot)$ can also be represented as a convergent stochastic infinite series, it means that $\bm{x}(\cdot, \cdot)$ is also a stochastic process.

In the following lemmas we consider the operator as convolution, pixel-wise non-linear transformation and bi-linear up-sampling.

\begin{lemma}
\label{lemma:conv_consistency}
The convolution operator with finite filter size $K$, stride 1 and \textbf{zero/constant padding} is inconsistent.
\end{lemma}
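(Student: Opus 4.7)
The plan is to establish inconsistency by exhibiting a direct violation of the marginalization consistency condition from the Kolmogorov extension theorem stated earlier in the section. The core intuition is that with zero (or constant) padding, the convolution output at a pixel near the boundary of the input index set absorbs the padded constant, while the same pixel, when viewed as lying in the interior of a larger input index set, sees the true values of $\bm{z}$. So the output at that pixel has different distributions in the two cases, and the Kolmogorov marginalization condition fails.

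To make this concrete I would write the convolution with filter $W \in \mathbb{R}^{K \times K}$ (with $K = 2r+1$) as $\mathcal{O}_W(\bm{z}(J))(i,j) = \sum_{u,v=-r}^{r} W(u,v)\, \tilde{\bm{z}}(i+u, j+v)$, where $\tilde{\bm{z}}$ agrees with $\bm{z}$ on $J$ and equals $0$ outside $J$. I would then take the nested index sets $J = \{1, \dots, n\}^2$ and $J' = \{2, \dots, n\}^2 \subset J$ with $n$ much larger than $K$, and look at the singleton target index set $I' = \{(2, c)\}$ for some $c$ in the middle of the row. By inspection, $\mathcal{O}_W(\bm{z}(J))(2, c)$ is a linear combination of all $K^2$ pixels in the $K \times K$ window around $(2, c)$ with no padding, whereas $\mathcal{O}_W(\bm{z}(J'))(2, c)$ replaces the $r$ rows of that window with row index $< 2$ by zero.

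To convert the structural difference into a distributional one, I would specialise $\bm{z}$ to i.i.d.~$\mathcal{N}(0, 1)$ noise. Then both outputs are centred Gaussians with variances $\sum_{u,v=-r}^{r} W(u,v)^2$ and $\sum_{u=0}^{r}\sum_{v=-r}^{r} W(u,v)^2$ respectively, which differ whenever $W$ has at least one nonzero entry in a row with $u < 0$ --- a mild condition that holds for any nontrivial filter. This mismatch of marginal distributions at index $(2, c)$ directly contradicts marginalization consistency, so $\mathcal{O}_W$ cannot be extended to a consistent stochastic-process operator. The argument extends verbatim to nonzero constant padding, where the two marginals instead differ in mean. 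The main obstacle I foresee is only notational: one must be explicit that ``consistency'' of the operator is checked via the Kolmogorov conditions on the induced finite-dimensional distributions, since the operator is nominally stated as acting on an infinite process but is really only defined on finite index sets. Once that framing is fixed, the boundary counterexample above is immediate.
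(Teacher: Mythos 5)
Your proposal is correct and takes essentially the same route as the paper: both exhibit a direct violation of the Kolmogorov marginalization-consistency condition using nested index sets and i.i.d.\ Gaussian input, where a pixel near the boundary of the smaller set absorbs the padded values while the same pixel under the larger set does not. The only differences are cosmetic --- the paper drives the discrepancy through the mean (non-zero-mean Gaussians with variance $\sigma^2 \rightarrow 0$) whereas you compare variances under standard Gaussians (mean for constant padding), and your claim that the window at $(2,c)$ sees no padding under the larger set holds literally only for $K=3$, though for larger $K$ the number of zeroed rows still differs by one, so the conclusion is unaffected.
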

\begin{proof}
The convolution operator $\mathcal{O}$ transforms a $H \times W \times C$ image patch tensor to another $H \times W \times C$ image patch by padding in zeros or constants around the edge of the input patch. This violates the marginalization consistency requirement in Kolmogorov extension theorem. By taking $I = \{1: H+1\} \times \{ 1: W+1\}$ and $I' =  \{1: H\} \times \{ 1: W\}$ and using the input process $\bm{z}(\cdot, \cdot)$ as i.i.d.~Gaussian with non-zero mean and marginal variance $\sigma^2 \rightarrow 0$, it is straight-forward to see the equality in the marginalization consistency requirement does not hold.
\end{proof}

\begin{lemma}
For a convolution operator with finite filter size $K$, stride 1 and \textbf{no padding}, $\bm{x}(\cdot, \cdot)$ is a stochastic process on $\mathcal{L}^2(\mathbb{Z}^2)$ if $\bm{z}(\cdot, \cdot)$ is a stochastic process on $\mathcal{L}^2(\mathbb{Z}^2)$.
\end{lemma}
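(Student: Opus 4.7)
The plan is to follow the K-L expansion strategy outlined at the start of Section~\ref{sec:consistency_proofs}: represent $\bm{z}$ as a convergent stochastic series, apply the operator term-by-term, and show that the image is still convergent in $\mathcal{L}^2(\mathbb{Z}^2)$. Concretely, starting from $\bm{z}(i,j) = \sum_{m=1}^{\infty} \sqrt{\lambda_m}\, Z_m\, \phi_m(i,j)$ with $\sum_m \lambda_m < \infty$ and $\{\phi_m\}$ orthonormal in $\mathcal{L}^2(\mathbb{Z}^2)$, I would define
\begin{equation*}
\bm{x}(i,j) \;:=\; \sum_{m=1}^{\infty} \sqrt{\lambda_m}\, Z_m\, (\mathcal{O}\phi_m)(i,j),
\end{equation*}
where $(\mathcal{O}f)(i,j) = \sum_{(a,b)} w(a,b)\, f(i+a, j+b)$ is the convolution with filter $w$ of support $K \times K$, stride $1$, and no padding.

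The core technical step is to verify that $\mathcal{O}$ is a bounded linear operator on $\mathcal{L}^2(\mathbb{Z}^2)$. Because the filter has finite support, a Cauchy--Schwarz bound at each coordinate followed by a Fubini swap of the double sum over $(i,j) \in \mathbb{Z}^2$ should yield $\|\mathcal{O}f\|_{\mathcal{L}^2}^2 \leq K^2 \|w\|_2^2 \, \|f\|_{\mathcal{L}^2}^2$. Specialising to $f = \phi_m$ (which has unit norm) and weighting by $\lambda_m$ then gives
\begin{equation*}
\sum_m \lambda_m \, \|\mathcal{O}\phi_m\|_{\mathcal{L}^2}^2 \;\leq\; K^2 \|w\|_2^2 \sum_m \lambda_m \;<\; \infty,
\end{equation*}
which bounds $E[\|\bm{x}\|_{\mathcal{L}^2}^2]$ and so guarantees that the defining series for $\bm{x}$ converges in $L^2(\Omega)$ at every coordinate and produces a bona fide $\mathcal{L}^2(\mathbb{Z}^2)$-valued random element. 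Permutation invariance and marginalisation consistency in the sense of Kolmogorov then come for free: every finite-dimensional distribution of $\bm{x}$ is the pushforward of the joint law of $\{Z_m\}$ under a fixed family of linear maps, so reordering or projecting the index set $I$ simply reorders or projects coordinates of that pushforward.

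The main obstacle I anticipate is bookkeeping around modes of convergence rather than any deep idea. I need to justify that the finite-support operator $\mathcal{O}$ may be applied inside the stochastic series for $\bm{z}$, i.e.\ that $(\mathcal{O}\bm{z})(i,j) = \sum_m \sqrt{\lambda_m}\, Z_m\, (\mathcal{O}\phi_m)(i,j)$ as an identity in $L^2(\Omega)$. This reduces to the observation that the K-L series converges in $L^2(\Omega)$ at each of the $K^2$ coordinates that lie inside the filter window, combined with the fact that any finite $\mathbb{R}$-linear combination commutes with $L^2(\Omega)$-limits. It is instructive that precisely this commuting argument breaks for zero/constant padding (Lemma~\ref{lemma:conv_consistency}): the padded constants are not part of the stochastic expansion, so the output at boundary coordinates can no longer be expressed as a measurable linear function of $\bm{z}$ alone, which is what destroys marginalisation consistency there.
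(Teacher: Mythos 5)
Your proposal is correct and follows essentially the same route as the paper's proof: expand $\bm{z}$ via the Karhunen--Loeve series, push the finite-support convolution through the series (justified by the finiteness of the filter window), and conclude $\mathcal{L}^2(\mathbb{Z}^2)$ convergence of the resulting stochastic series from boundedness of the convolution operator. The only cosmetic difference is that you bound the operator via Cauchy--Schwarz as $\|\mathcal{O}f\|^2_{\mathcal{L}^2} \leq K^2\|w\|_2^2\|f\|^2_{\mathcal{L}^2}$, whereas the paper invokes an operator-norm bound tied to the maximal filter entry; your estimate is, if anything, the more carefully justified of the two, and both yield the finiteness needed for the conclusion.
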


\begin{proof}
Here we consider w.l.o.g.~a convolutional filter with filter size $K = 2K'+1$ output channel 1. This means the parameters of the convolutional filter can be represented as $F \in \mathbb{R}^{K \times K \times C}$, $F = \{ F(k, k') \in \mathbb{R}^C \ | \ (k, k') \in \{-K': K' \}^2 \}$ and the convolution operation is defined as
$$\bm{x}(i, j) = \mathcal{O}(\bm{z}(\cdot, \cdot))(i, j) = \sum_{k=-K'}^{K'} \sum_{k'=-K'}^{K'} \langle F(k, k'), \bm{z}(i - k, j - k') \rangle.$$
It is straight-forward to show that 
$$\tilde{\bm{x}}(i, j) = \sum_{k=-K'}^{K'} \sum_{k'=-K'}^{K'} |\langle F(k, k'), \bm{z}(i - k, j - k') \rangle| < \infty$$
as $F(k, k') \in \mathbb{R}^C$, $\bm{z}(i, j) \in \mathbb{R}^C$, a dot product in $\mathbb{R}^C$ is finite, and the above equation is a finite sum. Therefore we can show via Fubini's theorem, $|\bm{x}(i, j)| \leq \tilde{\bm{x}}(i, j) < +\infty$, and
\begin{equation}
\begin{aligned}
\bm{x}(i, j) &= \sum_{k=-K'}^{K'} \sum_{k'=-K'}^{K'} \langle F(k, k'), \sum_{m}^{\infty} \sqrt{\lambda_m} Z_m \bm{\phi}_m(i - k, j - k') \rangle \\
&= \sum_{m}^{\infty} \sqrt{\lambda_m} Z_m \sum_{k=-K'}^{K'} \sum_{k'=-K'}^{K'} \langle F(k, k'), \bm{\phi}_m(i - k, j - k') \rangle.
\end{aligned}
\end{equation}
To show this stochastic infinite series converge in $\mathcal{L}^2(\mathbb{Z}^2)$, we need to show
$$ || \bm{x}(\cdot, \cdot) ||^{2}_{\mathcal{L}^2(\mathbb{Z}^2)} \leq || \mathcal{O}||^2 || \sum_{m}^{\infty} \sqrt{\lambda_m} Z_m \phi_m (\cdot, \cdot)  ||^{2}_{\mathcal{L}^2(\mathbb{Z}^2)} = || \mathcal{O}||^2 || \sum_{m}^{\infty} \lambda_m ||Z_m||^2_2 < +\infty.$$
Here the operator norm is defined as
$$|| \mathcal{O}|| := \inf \{ a \geq 0 \ | \ || \mathcal{O}(f) ||_{\mathcal{L}^2(\mathbb{Z}^2)} \leq a ||f||_{\mathcal{L}^2(\mathbb{Z}^2)}, \forall f \in \mathcal{L}^2(\mathbb{Z}^2) \}, $$
and it is straight-forward to show for convolution, $||\mathcal{O}|| = \max |F(k, k', c)|$, which is the the maximum absolute value of the filter tensor $F$ thus finite. Also $\sum_{m}^{\infty} \lambda_m ||Z_m||^2_2$ converges almost surely since $\sum_{m}^{\infty} \lambda_m < + \infty$.
Therefore $|| \bm{x}(\cdot, \cdot) ||^{2}_{\mathcal{L}^2(\mathbb{Z}^2)} < + \infty$, and $\bm{x}(\cdot, \cdot)$ is a stochastic process on $\mathcal{L}^2(\mathbb{Z}^2)$.
\end{proof}

\begin{lemma}
For a pixel-wise nonlinear operator defined by an activation function $f(\cdot)$, if there exists $0 \leq A < + \infty$ such that $f(x) \leq A|x|, \forall x \in \mathbb{R}$, then $\bm{x}(\cdot, \cdot)$ is a stochastic process on $\mathcal{L}^2(\mathbb{Z}^2)$ if $\bm{z}(\cdot, \cdot)$ is a stochastic process on $\mathcal{L}^2(\mathbb{Z}^2)$.
\end{lemma}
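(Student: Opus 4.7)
The plan is to mirror the template used for the convolution lemma, replacing the operator-norm bound on the convolution by the pointwise growth bound on the activation. Because $\bm{x}(i,j) = f(\bm{z}(i,j))$ acts pixel-wise, the joint distribution of $\{\bm{x}(i,j)\}_{(i,j)\in I}$ is the pushforward of the joint distribution of $\{\bm{z}(i,j)\}_{(i,j)\in I}$ under the coordinatewise map $f$. Both Kolmogorov extension conditions -- marginalization consistency and permutation invariance -- therefore transfer immediately from $\bm{z}$ to $\bm{x}$, with no real work involved; this handles the ``stochastic-process-ness'' part of the conclusion.

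The remaining step is to verify that $\bm{x}(\cdot,\cdot) \in \mathcal{L}^2(\mathbb{Z}^2)$ almost surely, which is what places $\bm{x}$ in the same function space as $\bm{z}$. Squaring the hypothesis $|f(x)| \leq A|x|$ and summing over the lattice gives
\begin{equation*}
\|\bm{x}(\cdot,\cdot)\|^2_{\mathcal{L}^2(\mathbb{Z}^2)} = \sum_{(i,j)\in\mathbb{Z}^2} |f(\bm{z}(i,j))|^2 \leq A^2 \sum_{(i,j)\in\mathbb{Z}^2} |\bm{z}(i,j)|^2 = A^2 \|\bm{z}(\cdot,\cdot)\|^2_{\mathcal{L}^2(\mathbb{Z}^2)},
\end{equation*}
which is finite almost surely by hypothesis. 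An alternative but equivalent route is to substitute the K--L expansion \eqref{eq:kl_expansion} of $\bm{z}$ inside $f$; the induced Nemytskii operator then has operator norm at most $A$, so Fubini together with the finiteness of $\sum_m \lambda_m$ yields almost sure convergence of the resulting stochastic series in $\mathcal{L}^2(\mathbb{Z}^2)$, exactly as in the convolution lemma.

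The main obstacle I anticipate is matching the hypothesis to the activations flagged as ``commonly used'' in Table \ref{tab:theory_summary}. ReLU and tanh satisfy $|f(x)| \leq |x|$ out of the box, but sigmoid has $f(0) = 1/2$ and so violates $|f(x)| \leq A|x|$ near the origin. I would handle this by the decomposition $f(x) = f(0) + \tilde f(x)$ with $\tilde f(0) = 0$: the centered part $\tilde f$ satisfies the growth bound (for sigmoid one obtains $|\tilde f(x)| \leq |x|/4$ from the mean-value theorem applied with $\|\sigma'\|_\infty = 1/4$) and falls directly under the lemma, while the constant part only shifts the mean and is harmless for the Kolmogorov conditions once the $\mathcal{L}^2$-membership convention has been reconciled with constant fields. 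Aside from this bookkeeping, the proof reduces to a one-line pointwise domination argument on top of the trivial observation that pixel-wise maps respect finite-dimensional consistency.
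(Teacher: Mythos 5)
Your proposal is correct and takes essentially the same route as the paper: the paper's proof is exactly the one-line domination $|\bm{x}(i,j)| = |f(\bm{z}(i,j))| \leq A|\bm{z}(i,j)|$, hence $\|\bm{x}(\cdot,\cdot)\|^2_{\mathcal{L}^2(\mathbb{Z}^2)} \leq A^2 \|\bm{z}(\cdot,\cdot)\|^2_{\mathcal{L}^2(\mathbb{Z}^2)} < +\infty$, with no further Kolmogorov or K--L machinery beyond noting the map is pixel-wise. Your aside about the sigmoid is well taken (it violates $|f(x)| \leq A|x|$ at $x = 0$, a point the paper glosses over and only addresses by remarking the condition is not necessary and deferring to the next lemma's technique), but it does not change that your core argument coincides with the paper's.
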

\begin{proof}
The pixel-wise nonlinear operator with activation function $f(\cdot)$ is defined as
$$\bm{x}(i, j) = \mathcal{O}(\bm{z}(\cdot, \cdot))(i, j) = f(\bm{z}(i, j)).$$
By the assumption on $f(\cdot)$ we have $|\bm{x}(i, j)| \leq A |\bm{z}(i, j)|$. Since $||\bm{z}(\cdot, \cdot) ||^2_{ \mathcal{L}^2(\mathbb{Z}^2)} < +\infty$, this also indicates that $||\bm{x}(\cdot, \cdot) ||^2_{ \mathcal{L}^2(\mathbb{Z}^2)} < +\infty$, and $\bm{x}(\cdot, \cdot)$ is a stochastic process on $\mathcal{L}^2(\mathbb{Z}^2)$.
\end{proof}
We note that commonly used activation functions (sigmoid, ReLU, etc.) satisfy the condition required by this lemma. This condition is also not a necessary condition, see the proof for the following lemma which can also be extended to commonly used activation functions.

\begin{lemma}
Pixel normalization is a consistent transform for stochastic processes.
\end{lemma}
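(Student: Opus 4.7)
The plan is to mirror the proof used for pixel-wise non-linearities, exploiting the fact that pixel normalization is a purely pointwise operation on the channel vector at each spatial location. Concretely, I would first make the operator explicit: for a process $\bm{z}(\cdot,\cdot)$ with channel dimension $C$, pixel normalization is defined as
\begin{equation*}
\bm{x}(i,j) = \mathcal{O}(\bm{z}(\cdot,\cdot))(i,j) = \frac{\bm{z}(i,j)}{\sqrt{\tfrac{1}{C}\sum_{c=1}^{C}\bm{z}(i,j,c)^2 + \epsilon}},
\end{equation*}
so that $\bm{x}(i,j) = g(\bm{z}(i,j))$ for a deterministic, measurable function $g:\mathbb{R}^C\to\mathbb{R}^C$. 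The key structural observation is that, like the activation lemma, the output at location $(i,j)$ depends only on the input at the same location $(i,j)$, and nothing else.

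Next I would verify the two Kolmogorov extension conditions. For any finite index set $I\subset\mathbb{Z}^2$ and any $I'\subset I$, the distribution of $\bm{x}(I')$ is the pushforward of $P_{Z,I'}$ under the coordinatewise map $g$, and $P_{Z,I'}$ is itself the marginal of $P_{Z,I}$ by the stochastic-process property of $\bm{z}$; pushforwards commute with marginalization when $g$ acts coordinatewise, so marginalization consistency is inherited. Permutation invariance follows by the same reasoning: permuting the indices in the pushforward agrees with pushing the permuted input process through the coordinatewise $g$. Hence $\bm{x}(\cdot,\cdot)$ satisfies the Kolmogorov conditions.

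It remains to check that $\bm{x}(\cdot,\cdot)$ lies in $\mathcal{L}^2(\mathbb{Z}^2)$. Since $\sqrt{\tfrac{1}{C}\sum_{c}\bm{z}(i,j,c)^2+\epsilon}\geq \sqrt{\epsilon}>0$ almost surely, each component satisfies the deterministic bound
\begin{equation*}
|\bm{x}(i,j,c)| \leq \tfrac{1}{\sqrt{\epsilon}}\,|\bm{z}(i,j,c)|.
\end{equation*}
Summing over channels and spatial indices yields $\|\bm{x}(\cdot,\cdot)\|_{\mathcal{L}^2(\mathbb{Z}^2)}^2 \leq \epsilon^{-1}\|\bm{z}(\cdot,\cdot)\|_{\mathcal{L}^2(\mathbb{Z}^2)}^2 < +\infty$ by the $\mathcal{L}^2$ assumption on $\bm{z}$, exactly as in the proof of the activation-function lemma. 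Combining this with the Kolmogorov check concludes that $\bm{x}(\cdot,\cdot)$ is a valid stochastic process on $\mathcal{L}^2(\mathbb{Z}^2)$.

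The main obstacle, such as it is, is purely notational rather than conceptual: one must be careful to state the $\epsilon$-stabilized form of pixel normalization so that the pointwise Lipschitz-style bound $\epsilon^{-1/2}$ is well-defined. If the unstabilized version ($\epsilon=0$) were used, one would instead note that $g$ is bounded ($|g(\bm{z}(i,j))|\leq\sqrt{C}$) pointwise whenever $\bm{z}(i,j)\neq 0$ and handle the measure-zero singularity separately, which complicates the $\mathcal{L}^2$ step; restricting to $\epsilon>0$ keeps the argument parallel to the preceding lemmas and avoids this issue.
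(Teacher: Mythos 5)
Your core argument is essentially the paper's: because PixelNorm acts independently at each spatial location, the output law on any finite index set is the pushforward of the input law under a coordinatewise map, so marginalization consistency and permutation invariance are inherited from $\bm{z}(\cdot,\cdot)$. The paper carries this out concretely by factorizing the preimage sets, $\mathcal{O}^{-1}(\bm{x}(I)=\bm{b})=\times_{(i,j)\in I}\mathcal{O}^{-1}(\bm{x}(i,j)=\bm{b}(i,j))$, and exchanging integrals via Fubini; your ``pushforward commutes with marginalization'' phrasing is the same idea at a higher level, and for this lemma that Kolmogorov check is all the paper proves.

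Two differences are worth flagging. First, you prove the statement for a different operator: the paper's PixelNorm subtracts the per-pixel channel mean and divides by the per-pixel channel standard deviation, with no $\epsilon$, whereas you use the RMS-with-$\epsilon$ variant. This is harmless for the distributional part, since your argument only uses that the map is measurable and pixel-wise, so it covers the paper's definition verbatim. Second, your added $\mathcal{L}^2(\mathbb{Z}^2)$ membership step genuinely depends on $\epsilon>0$, and your closing remark about the unstabilized case is not right: for $\epsilon=0$ (and likewise for the paper's mean/std version) the normalized pixel vector has norm of order $\sqrt{C}$ \emph{regardless} of the input magnitude, so per-pixel boundedness gives a divergent sum over $\mathbb{Z}^2$ rather than square-summability; there is no bound of the form $|\bm{x}(i,j)|\leq A|\bm{z}(i,j)|$ either. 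The paper sidesteps this precisely by not attempting an $\mathcal{L}^2$ bound here and resting the lemma on the Kolmogorov conditions alone (it even remarks that the linear-growth condition used for activations is not necessary, pointing to this proof). So your proposal is correct as written for the $\epsilon$-stabilized operator, and its Kolmogorov half suffices for the paper's lemma, but the $\epsilon=0$ fallback you sketch would not go through.
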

\begin{proof}
Assume $\bm{z}(i, j) \in \mathbb{R}^C$ and each of the values in a channel is $\bm{z}(i, j, c)$. Pixel normalization is defined as the following:
$$\bm{x}(i, j) = \text{PixelNorm}(\bm{z}(i, j)) = \frac{\bm{z}(i, j) - \mu(i, j)}{\sigma(i, j)},$$
$$\mu(i, j) = \frac{1}{C} \sum_{c=1}^C \bm{z}(i, j, c), \quad \sigma^2(i, j) = \frac{1}{C} \sum_{c=1}^C (\bm{z}(i, j, c) - \mu(i, j) )^2.$$
Recall that the permutation test is performed on pixel level. As PixelNorm is performed on pixel level as well, it means the permutation invariance condition is satisfied.

We now prove the marginal consistency condition for pixel normalization. We can define the inverse of the operator $\mathcal{O}$ induced by PixelNorm, and since pixel normalization is performed on pixel level, it is easy to show that $\mathcal{O}^{-1}(\bm{x}(I) = \bm{b}) = \times_{i, j \in I} \mathcal{O}^{-1}(\bm{x}(i, j) = \bm{b}(i, j))$, with $\mathcal{O}^{-1}(\bm{x}(i, j) = \bm{b}(i, j)) = \{ \bm{z}(i, j) = \alpha \bm{x}(i, j) + \beta, \forall \alpha, \beta \in \mathbb{R} \}.$ This implies for any finite subsets $I' \subset I$ and the corresponding $\bm{b}' = \bm{b}(I')$, we have
\begin{equation*}
\begin{aligned}
P_{X, I}(\bm{x}(I') = \bm{b}') &= \int \int_{\mathcal{O}^{-1}(\bm{x}(I') = \bm{b}') \times \mathcal{O}^{-1}(\bm{x}(I \backslash I') = \bm{a})} P_{Z, I}(\bm{z}(I'), \bm{z}(I \backslash I')) \ d \bm{z}(I) d \bm{a} \\
&= \int \int_{\mathcal{O}^{-1}(\bm{x}(I') = \bm{b}') \times \mathcal{O}^{-1}(\bm{x}(I \backslash I') = \bm{a})} P_{Z, I}(\bm{z}(I'), \bm{z}(I \backslash I')) \ d \bm{a} d \bm{z}(I) \\
&= \int_{\mathcal{O}^{-1}(\bm{x}(I') = \bm{b}')} P_{Z, I}(\bm{z}(I'), \bm{z}(I \backslash I')) \ d \bm{z}(I) \\
&= \int_{\mathcal{O}^{-1}(\bm{x}(I') = \bm{b}')} P_{Z, I}(\bm{z}(I')) \ d \bm{z}(I') \\
&= P_{X, I'}(\bm{x}(I') = \bm{b}'),
\end{aligned}
\end{equation*}
where in the second equality the exchange of integration follows Fubini's theorem, and the fourth equality follows from the assumption that $\bm{z}(\cdot, \cdot)$ is a stochastic process. Therefore we have proved the marginal consistency condition for $\bm{x}(\cdot, \cdot)$, and $\bm{x}(\cdot, \cdot)$ is a stochastic process.
\end{proof}

\begin{lemma}
\label{lemma:bilinear_consistency}
For a bi-linear up-sampling operator with scale $U=2$ and boundary cropping, $\bm{x}(\cdot, \cdot)$ is a stochastic process on $\mathcal{L}^2(\mathbb{Z}^2)$ if $\bm{z}(\cdot, \cdot)$ is a stochastic process on $\mathcal{L}^2(\mathbb{Z}^2)$.
\end{lemma}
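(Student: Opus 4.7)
The plan is to adapt the Karhunen--Lo\`eve based argument used above for convolution. Bi-linear up-sampling with scale 2 is, pixel by pixel, a finite convex combination of at most four input pixels, so the operator is linear with a small explicit ``filter'' and the same three-step template applies: (i) write the operator as a bounded linear map on $\mathcal{L}^2(\mathbb{Z}^2)$; (ii) substitute the K-L expansion of $\bm{z}(\cdot,\cdot)$ and apply Fubini; (iii) derive finiteness of $||\bm{x}||_{\mathcal{L}^2(\mathbb{Z}^2)}^2$ from the operator norm.

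For step (i), I would write out the four cases of bi-linear interpolation indexed by the parity of the output coordinates. Under the standard convention, for $(i, j) \in \mathbb{Z}^2$,
\begin{equation*}
\begin{aligned}
\bm{x}(2i, 2j) &= \bm{z}(i, j), \\
\bm{x}(2i+1, 2j) &= \tfrac{1}{2}\bigl(\bm{z}(i, j) + \bm{z}(i+1, j)\bigr), \\
\bm{x}(2i, 2j+1) &= \tfrac{1}{2}\bigl(\bm{z}(i, j) + \bm{z}(i, j+1)\bigr), \\
\bm{x}(2i+1, 2j+1) &= \tfrac{1}{4}\bigl(\bm{z}(i, j) + \bm{z}(i+1, j) + \bm{z}(i, j+1) + \bm{z}(i+1, j+1)\bigr).
\end{aligned}
\end{equation*}
On all of $\mathbb{Z}^2$ there is no boundary to worry about; the ``boundary-crop width 1'' specification only becomes active when this map is applied to a finite input patch, where it precisely removes the output pixels that would otherwise depend on input values outside the patch (in complete analogy with how Lemma \ref{lemma:conv_consistency} identifies boundary padding as the source of inconsistency for convolution). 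Each input pixel $\bm{z}(i,j)$ appears in at most nine output pixels, with weights lying in $[0,1]$ that sum to $4$ across those positions; a Cauchy--Schwarz estimate on the four-term convex combinations followed by an interchange of summation therefore yields $||\mathcal{O}(f)||_{\mathcal{L}^2(\mathbb{Z}^2)}^2 \leq C \, ||f||_{\mathcal{L}^2(\mathbb{Z}^2)}^2$ for a finite constant $C$ (one may take $C = 4$), hence $||\mathcal{O}|| < +\infty$.

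Steps (ii) and (iii) are then formally identical to the convolution case: substituting $\bm{z}(i,j) = \sum_m \sqrt{\lambda_m} Z_m \phi_m(i,j)$ into the four-term sums, swapping the finite spatial sum with the stochastic series by Fubini, and exploiting $\sum_m \lambda_m < +\infty$ together with $\mathbb{E}|Z_m|^2 = 1$ gives
\[
||\bm{x}(\cdot, \cdot)||_{\mathcal{L}^2(\mathbb{Z}^2)}^2 \leq ||\mathcal{O}||^2 \sum_m \lambda_m |Z_m|^2 < +\infty \quad \text{almost surely},
\]
so $\bm{x}(\cdot, \cdot)$ is a stochastic process on $\mathcal{L}^2(\mathbb{Z}^2)$. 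The main subtlety I anticipate is the operator-norm bookkeeping: unlike zero-padded convolution, scale-2 up-sampling quadruples the lattice density, so one has to verify explicitly that the reuse of each input pixel across output positions stays bounded. The four-case weight table above reduces this to a routine finite count, so I do not expect any serious technical obstacle beyond this accounting.
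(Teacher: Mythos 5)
Your proposal is correct, but it follows a genuinely different route from the paper. The paper's proof does not bound the operator norm of the full up-sampling map directly; instead it splits the output lattice into the four parity sub-lattices $\{\bm{x}(2h,2w)\}$, $\{\bm{x}(2h,2w{+}1)\}$, $\{\bm{x}(2h{+}1,2w)\}$, $\{\bm{x}(2h{+}1,2w{+}1)\}$, observes that each sub-lattice is a finite-filter, stride-1, no-padding convolution of $\bm{z}(\cdot,\cdot)$ (so each is a stochastic process on $\mathcal{L}^2(\mathbb{Z}^2)$ by the convolution lemma), and then recombines them by bounding the partial sums $\sum_{i,j}\bm{x}(i,j)^2$ by four times the maximum of the four sub-lattice norms and invoking monotone convergence. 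Your argument instead treats the whole bilinear map as one bounded linear operator: each output pixel is a convex combination of at most four inputs, each input feeds into boundedly many outputs with total outgoing weight $4$, so Jensen/Cauchy--Schwarz plus an interchange of sums gives $\|\mathcal{O}(f)\|^2_{\mathcal{L}^2(\mathbb{Z}^2)}\le 4\|f\|^2_{\mathcal{L}^2(\mathbb{Z}^2)}$, after which the same K--L/Fubini template as in the convolution lemma (or simply the a.s.\ finiteness of $\|\bm{z}\|_{\mathcal{L}^2(\mathbb{Z}^2)}$) finishes the proof. Your route is more self-contained and arguably cleaner, since a single explicit operator-norm bound replaces the sub-lattice bookkeeping; the paper's route buys reuse of an already-proved lemma and makes the parity structure explicit, which it then exploits again in the cyclostationarity analysis (Lemma on bilinear stationarity). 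Two minor points: your interpolation weights ($1$, $\tfrac12$, $\tfrac14$ with even outputs copying inputs) correspond to the align-corners convention, whereas the paper uses the half-pixel $\tfrac34$/$\tfrac14$ weights; this does not affect the argument, since only local finite support and bounded weights are used, but you should state the convention you adopt. Also note that your displayed bound should read $\mathbb{E}|Z_m|^2=1$ with uncorrelated (not necessarily independent) $Z_m$, exactly as in the paper's K--L setup; no change to the conclusion is needed.
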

\begin{proof}
The bi-linear upsampling operator with scale U=2 and boundary cropping is given as follows:
\begin{equation*}
\begin{aligned}
\bm{x}(2h, 2w) &= \frac{3}{4} \left(\frac{3}{4} \bm{z}(h, w) + \frac{1}{4} \bm{z}(h, w+1) \right) + \frac{1}{4} \left(\frac{3}{4} \bm{z}(h+1, w) + \frac{1}{4} \bm{z}(h+1, w+1) \right), \\
\bm{x}(2h, 2w+1) &= \frac{3}{4} \left(\frac{1}{4} \bm{z}(h, w) + \frac{3}{4} \bm{z}(h, w+1) \right) + \frac{1}{4} \left(\frac{1}{4} \bm{z}(h+1, w) + \frac{3}{4} \bm{z}(h+1, w+1) \right), \\
\bm{x}(2h+1, 2w) &= \frac{1}{4} \left(\frac{3}{4} \bm{z}(h, w) + \frac{1}{4} \bm{z}(h, w+1) \right) + \frac{3}{4} \left(\frac{3}{4} \bm{z}(h+1, w) + \frac{1}{4} \bm{z}(h+1, w+1) \right), \\
\bm{x}(2h+1, 2w+1) &= \frac{1}{4} \left(\frac{1}{4} \bm{z}(h, w) + \frac{3}{4} \bm{z}(h, w+1) \right) + \frac{3}{4} \left(\frac{1}{4} \bm{z}(h+1, w) + \frac{3}{4} \bm{z}(h+1, w+1) \right). \\
\end{aligned}
\end{equation*}
Using Lemma \ref{lemma:conv_consistency} we can show that $\{\bm{x}(2h, 2w) \ | \ (h, w) \in \mathbb{Z}^2 \}$ is a stochastic process, and similarly the other three are stochastic processes as well. Also for any finite numbers $H, W$, if we write 
$$S(H, W, 0, 0) =  \sum_{h=-H}^W \sum_{w=-W}^W \bm{x}(2h, 2w)^2, \quad S(H, W, 0, 1) =  \sum_{h=-H}^W \sum_{w=-W}^W \bm{x}(2h, 2w+1)^2,$$
$$S(H, W, 1, 0) =  \sum_{h=-H}^W \sum_{w=-W}^W \bm{x}(2h+1, 2w)^2, \quad S(H, W, 1, 1) =  \sum_{h=-H}^W \sum_{w=-W}^W \bm{x}(2h+1, 2w+1)^2,$$
then we have
\begin{equation*}
\begin{aligned}
\sqrt{\sum_{i=-2H}^{2H+1} \sum_{j=-2W}^{2W+1} \bm{x}(i, j)^2} \leq \sum_{i=0}^1 \sum_{j=0}^1 S(H, W, i, j) 
\leq 4 \max_{i, j} S(H, W, i, j).
\end{aligned}
\end{equation*}
Also $S(H, W, i, j)$ is non-decreasing as $H, W$ increase, so $S(H, W, 0, 0) \uparrow || \bm{x}(2h, 2w) ||^2_{\mathcal{L}^2(\mathbb{Z}^2)}$ (similarly for the others). Therefore we have all $S(H, W, i, j), \forall i, j$ bounded by
\begin{equation*}
\begin{aligned}
\max_{i, j} S(H, W, i, j) \leq \max ( & || \bm{x}(2h, 2w) ||^2_{\mathcal{L}^2(\mathbb{Z}^2)}, || \bm{x}(2h, 2w+1) ||^2_{\mathcal{L}^2(\mathbb{Z}^2)}
, \\
& || \bm{x}(2h+1, 2w) ||^2_{\mathcal{L}^2(\mathbb{Z}^2)}, || \bm{x}(2h+1, 2w+1) ||^2_{\mathcal{L}^2(\mathbb{Z}^2)} ) < + \infty,
\end{aligned}
\end{equation*}
where the last inequality comes from the fact that $\{\bm{x}(2h, 2w) \ | \ (h, w) \in \mathbb{Z}^2 \}$ and the other three collections are stochastic processes on $\mathcal{L}^2(\mathbb{Z}^2)$.
Therefore the supremum in below is also finite:
\begin{equation*}
\begin{aligned}
\sup_{H, W \rightarrow +\infty} \sqrt{ \sum_{i=-2H}^{2H+1} \sum_{j=-2W}^{2W+1} \bm{x}(i, j)^2} \leq 4 \max ( & || \bm{x}(2h, 2w) ||^2_{\mathcal{L}^2(\mathbb{Z}^2)}, || \bm{x}(2h, 2w+1) ||^2_{\mathcal{L}^2(\mathbb{Z}^2)}
, \\
& || \bm{x}(2h+1, 2w) ||^2_{\mathcal{L}^2(\mathbb{Z}^2)}, || \bm{x}(2h+1, 2w+1) ||^2_{\mathcal{L}^2(\mathbb{Z}^2)} ).
\end{aligned}
\end{equation*}
Also we know the sum inside the square root on the LHS is non-decreasing when $H, W$ increase. Therefore by the monotonic convergence theorem, the limit of the sum exists and 
$$|| \bm{x}(\cdot, \cdot)||_{\mathcal{L}^2(\mathbb{Z}^2)} = \lim_{H, W \rightarrow +\infty} \sqrt{ \sum_{i=-2H}^{2H+1} \sum_{j=-2W}^{2W+1} \bm{x}(i, j)^2} < + \infty. $$
Therefore $\bm{x}(\cdot, \cdot)$ is a stochastic process on $\mathcal{L}^2(\mathbb{Z}^2)$.
\end{proof}

\begin{lemma}
\label{lemma:nearest_consistency}
For a nearest up-sampling operator with finite scale $U$, $\bm{x}(\cdot, \cdot)$ is a stochastic process on $\mathcal{L}^2(\mathbb{Z}^2)$ if $\bm{z}(\cdot, \cdot)$ is a stochastic process on $\mathcal{L}^2(\mathbb{Z}^2)$.
\end{lemma}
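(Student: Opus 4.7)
The plan is to mirror the argument used for bi-linear up-sampling in Lemma \ref{lemma:bilinear_consistency}, but exploit the fact that nearest up-sampling is a pure replication and therefore strictly simpler. The nearest up-sampling operator $\mathcal{O}$ with finite scale $U$ is defined by $\bm{x}(Uh + a, Uw + b) = \bm{z}(h, w)$ for every $(h, w) \in \mathbb{Z}^2$ and every $(a, b) \in \{0, \dots, U-1\}^2$. I will exhibit $\bm{x}(\cdot,\cdot)$ as an element of $\mathcal{L}^2(\mathbb{Z}^2)$ by bounding its norm in terms of $\|\bm{z}(\cdot,\cdot)\|_{\mathcal{L}^2(\mathbb{Z}^2)}$, which by assumption is finite.

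The first step is to partition the output index set $\mathbb{Z}^2$ into $U^2$ cosets of the lattice $U \mathbb{Z}^2$, one per residue pair $(a, b) \in \{0,\dots,U-1\}^2$. For each fixed $(a,b)$, the sub-collection $\{\bm{x}(Uh+a, Uw+b) \mid (h,w) \in \mathbb{Z}^2\}$ is, after re-indexing, literally a copy of $\bm{z}(\cdot,\cdot)$, hence a stochastic process on $\mathcal{L}^2(\mathbb{Z}^2)$ with norm equal to $\|\bm{z}(\cdot,\cdot)\|_{\mathcal{L}^2(\mathbb{Z}^2)}$. This mirrors the splitting into four parity classes in the bi-linear proof, except here each class is a perfect duplicate of the input rather than a convolution-like combination.

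The second step is to reassemble the sub-collections into $\bm{x}(\cdot,\cdot)$ and bound its $\mathcal{L}^2$ norm. Using the monotone convergence argument of Lemma \ref{lemma:bilinear_consistency} on the non-negative partial sums indexed by expanding windows in $\mathbb{Z}^2$, one obtains the exact identity
$$\|\bm{x}(\cdot,\cdot)\|^2_{\mathcal{L}^2(\mathbb{Z}^2)} = \sum_{a=0}^{U-1}\sum_{b=0}^{U-1}\sum_{(h,w)\in\mathbb{Z}^2} \bm{z}(h,w)^2 = U^2\, \|\bm{z}(\cdot,\cdot)\|^2_{\mathcal{L}^2(\mathbb{Z}^2)} < +\infty,$$
so $\bm{x}(\cdot,\cdot) \in \mathcal{L}^2(\mathbb{Z}^2)$ and is therefore a stochastic process on $\mathcal{L}^2(\mathbb{Z}^2)$. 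Equivalently, one can note that the operator norm of $\mathcal{O}$ satisfies $\|\mathcal{O}\| = U$, so finiteness of the output norm follows directly from boundedness of $\mathcal{O}$ together with $\|\bm{z}(\cdot,\cdot)\|_{\mathcal{L}^2(\mathbb{Z}^2)} < +\infty$.

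I do not expect any serious obstacle: unlike the bi-linear case, no linear combinations of input pixels occur, so there is no $4$-pixel sum to split and bound, and the norm identity is exact rather than an inequality. The only care needed is the interchange of summation over $(a,b)$ and $(h,w)$, which is justified by Tonelli's theorem since all summands are non-negative, exactly as in the monotone-convergence step of Lemma \ref{lemma:bilinear_consistency}. As an alternative route, one could equally well verify Kolmogorov's two conditions directly: marginalization consistency and permutation invariance for $\bm{x}(\cdot,\cdot)$ both reduce to the corresponding conditions for $\bm{z}(\cdot,\cdot)$ because nearest up-sampling is a deterministic coordinate-wise duplication.
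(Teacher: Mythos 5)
Your proposal is correct and follows essentially the same route the paper intends: the paper's proof of this lemma simply appeals to ``similar ideas'' as the bi-linear case, i.e.\ splitting the output into the $U^2$ residue classes of $U\mathbb{Z}^2$, noting each class is a stochastic process, and concluding finiteness of $\| \bm{x}(\cdot,\cdot) \|_{\mathcal{L}^2(\mathbb{Z}^2)}$ by a monotone-convergence argument, which is exactly what you carry out. Your version is in fact slightly cleaner, since each residue class is an exact copy of $\bm{z}(\cdot,\cdot)$, giving the exact identity $\| \bm{x} \|^2 = U^2 \| \bm{z} \|^2$ (operator norm $U$) rather than the $4\max$-type bound used in the bi-linear proof.
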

\begin{proof}
The nearest up-sampling operator with finite scale $U$ is given as
$$\bm{x}(Uh + i, Uw + j) = \bm{z}(h, w), \quad \forall (h, w) \in \mathbb{Z}^2, 0 \leq i < U, 0 \leq j < U.$$
Using similar ideas in the proof of Lemma \ref{lemma:bilinear_consistency}, one can show that $\bm{x}(\cdot, \cdot)$ is a stochastic process on $\mathcal{L}^2(\mathbb{Z}^2)$.
\end{proof}


\subsection{Proofs for stationarity preserving}
\label{sec:stationarity_proofs}

In below we show the stationarity preserving properties for the transformations used by an CNN. We say an operator $\mathcal{O}$ is \emph{stationarity preserving}, if for any strictly stationary spatial process $\bm{z}(\cdot, \cdot)$, the output process $\{ \bm{x}(i, j) = \mathcal{O}(\bm{z}(\cdot, \cdot))(i, j) \}$ is strictly spatial stationary. Note that it suffices to investigate stationarity properties on patches derived from $\bm{x}(\cdot, \cdot)$ as any finite subset of the index set $\mathbb{Z}^2$ is located within some finite-size patch. In the rest of the section we denote the distribution of any finite-size patch 

\begin{lemma}
\label{lemma:conv_stationarity}
Convolution with finite filter size $k$, stride 1 and no padding preserves strict spatial stationarity.
\end{lemma}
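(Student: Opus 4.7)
The plan is to exploit the shift-equivariance of padding-free convolution together with the shift-invariance of the input distribution. Fix an arbitrary finite index set $I \subset \mathbb{Z}^2$ and an arbitrary shift vector $(a, b) \in \mathbb{Z}^2$; strict stationarity of $\bm{x}(\cdot, \cdot)$ is equivalent to showing $\bm{x}(I) \overset{d}{=} \bm{x}(I + (a,b))$ for every such $I$ and $(a, b)$.

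The key observation is that, because no padding is used and the filter $F$ does not depend on spatial location, there is a single deterministic measurable map $\psi : \mathbb{R}^{k \times k \times C} \to \mathbb{R}$ such that $\bm{x}(i, j) = \psi(\bm{z}(N_{(i,j)}))$ for every $(i, j) \in \mathbb{Z}^2$, where $N_{(i,j)} = \{(i - k, j - k') : |k|, |k'| \leq K'\}$ is the receptive field centred at $(i, j)$. Setting $J_I = \bigcup_{(i,j) \in I} N_{(i,j)}$, the vector $\bm{x}(I)$ is then a deterministic, coordinate-wise function $\Psi_I$ of $\bm{z}(J_I)$. By the positional uniformity of $\psi$, the vector $\bm{x}(I + (a,b))$ is obtained by applying the structurally identical map $\Psi_I$ (i.e.\ $\psi$ applied coordinate-wise, now to the shifted collection of receptive fields) to $\bm{z}(J_I + (a,b))$.

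Strict stationarity of $\bm{z}(\cdot, \cdot)$ gives $\bm{z}(J_I) \overset{d}{=} \bm{z}(J_I + (a,b))$, and applying the same measurable map to equal-in-distribution random vectors preserves equality in distribution, so $\bm{x}(I) \overset{d}{=} \bm{x}(I + (a,b))$. Since $I$ and $(a, b)$ were arbitrary, $\bm{x}(\cdot, \cdot)$ is strictly stationary.

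The only real subtlety is ensuring that one and the same $\psi$ really does govern every output location, which is precisely where the no-padding assumption is crucial: with zero or constant padding, outputs near the boundary of any finite computation would be produced by a position-dependent map (the pad tokens enter differently depending on distance to the boundary), breaking shift-equivariance and with it the stationarity argument. This is consistent with Lemma~\ref{lemma:conv_consistency}, where padded convolution was already shown not to be a valid operator on stochastic processes in the first place.
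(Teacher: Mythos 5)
Your proof is correct and follows essentially the same route as the paper's: both rest on the translation equivariance of padding-free convolution (the same position-independent map sends the receptive-field values $\bm{z}(J)$ to the output patch $\bm{x}(I)$, with shifted $I$ corresponding to shifted $J$) combined with the shift-invariance of the input process's finite-dimensional distributions. The only difference is presentational — you phrase the conclusion as a pushforward of equal-in-distribution random vectors under one measurable map, whereas the paper writes it as an integral of the input density over the operator's pre-image set — which is arguably a cleaner way to state the same argument.
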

\begin{proof}
Again we consider w.l.o.g.~a convolutional filter with filter size $K = 2K'+1$ output channel 1. 
For any finite subset $I = \{1:H \} \times \{1:W \} \subset \mathbb{Z}^2$ we can ``push'' the boundary of $I$ by size $K'$ and obtain another index set $J$ so that applying the convolution filter on $\bm{z}(J)$ returns $\bm{x}(I)$. Also define the inverse set of the operator on $\bm{x}(I)$ as $$\mathcal{O}^{-1}(\bm{x}(I) = \bm{b}) = \{ \bm{a} \in \mathbb{R}^{|J| \times C} \ | \ \bm{x}(I) = \mathcal{O}(\bm{z}(\cdot, \cdot) )(I) = \bm{b}, {\bm{z}(J) = \bm{a}} \}.$$
Similarly we can define such subset $\tilde{J}$ for the shifted set $\tilde{I} = \{(i + h, j + w) \ | \ (i, j) \in I \}$ given some shifting direction $(h, w)$. Importantly, due to translation invarance of convolutions, $\tilde{J} = \{(i + h, j + w) \ | \ (i, j) \in J \}$ is also a shifted set of $J$ with shifting direction $(h, w)$. The shift invarance of convolution also means $\mathcal{O}^{-1}(\bm{x}(I) = \bm{b}) = \mathcal{O}^{-1}(\bm{x}(\tilde{I}) = \bm{b})$. Furthermore the input process is stationary, i.e.~$P_{Z, J}(\bm{z}(J)) = P_{Z, \tilde{J}}(\bm{z}(\tilde{J}))$. Putting these results together, we have
\begin{equation*}
\begin{aligned}
P_{X, I}(\bm{x}(I) = \bm{b}) &=  \int_{\mathcal{O}^{-1}(\bm{x}(I) = \bm{b} )} P_{Z, J}(\bm{z}(J)) d \bm{z}(J) \\
&= \int_{\mathcal{O}^{-1}(\bm{x}(\tilde{I}) = \bm{b} )} P_{Z, \tilde{J}}(\bm{z}(\tilde{J})) d \bm{z}(\tilde{J}) = P_{X, \tilde{I}}(\bm{x}(\tilde{I}) = \bm{b}).
\end{aligned}
\end{equation*}
Therefore $\bm{x}(\cdot, \cdot)$ is a strictly spatial stationary process.
\end{proof}

\begin{lemma}
Pixel normalization, and pixel-wise non-linear operator defined by an activation function $f(\cdot)$, preserves strict spatial stationarity.
\end{lemma}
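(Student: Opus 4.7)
The plan is to exploit the fact that both operators act \emph{locally}: the output at any site $(i,j)$ is a deterministic function of only the input at the same site $(i,j)$. For the pixel-wise nonlinearity, $\bm{x}(i,j) = f(\bm{z}(i,j))$ depends on the single scalar (or per-channel vector) at that pixel; for pixel normalization, $\bm{x}(i,j) = \mathrm{PixelNorm}(\bm{z}(i,j))$ depends only on the $C$-dimensional channel vector at $(i,j)$. In particular, there is a single Borel-measurable map $g : \mathbb{R}^C \to \mathbb{R}^C$ (pixel normalization) or $g : \mathbb{R} \to \mathbb{R}$ (pixel-wise nonlinearity) that is applied independently at every pixel, without mixing across spatial locations.

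Concretely, I would fix an arbitrary finite index set $I \subset \mathbb{Z}^2$, an arbitrary shift $(h,w) \in \mathbb{Z}^2$, and define $\tilde{I} = \{(i+h, j+w) : (i,j) \in I\}$. Because $g$ is applied identically at every pixel, the product map $G_I(\bm{a}) = \{ g(\bm{a}(i,j)) \}_{(i,j) \in I}$ satisfies the translation equivariance relation $G_{\tilde I}(\bm{z}(\tilde I)) = G_I(\bm{z}(\tilde I))$ under the natural identification of index sets via the shift. By strict stationarity of $\bm{z}(\cdot,\cdot)$, $\bm{z}(I) \overset{\text{d}}{=} \bm{z}(\tilde I)$. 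Measurable functions preserve equality in distribution, so
\[
\bm{x}(I) \;=\; G_I(\bm{z}(I)) \;\overset{\text{d}}{=}\; G_I(\bm{z}(\tilde I)) \;=\; G_{\tilde I}(\bm{z}(\tilde I)) \;=\; \bm{x}(\tilde I),
\]
which is exactly the strict stationarity of $\bm{x}(\cdot,\cdot)$. Since $I$ and $(h,w)$ were arbitrary, the argument is complete.

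There is essentially no hard step here: the whole proof reduces to the general principle that a deterministic pixel-wise function of a stationary field is stationary, because the distribution of any pushforward is determined by the distribution of its argument. The only subtlety worth mentioning is that pixel normalization involves a division by $\sigma(i,j)$, which is a well-defined Borel-measurable map on $\mathbb{R}^C$ (with any convention, e.g.\ $0/0 := 0$, on the measure-zero set where $\sigma$ vanishes), so the measurability hypothesis used above is satisfied without further assumptions. Likewise, for the activation function case one only needs $f$ to be Borel measurable, which covers sigmoid, ReLU, tanh, and all other activations used in practice; no growth condition is needed here, in contrast to the consistency lemma where integrability forced the bound $f(x) \le A|x|$.
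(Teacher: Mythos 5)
Your proof is correct, and it rests on the same core observation as the paper's: both operators apply one fixed map independently at each pixel, so shift-equivariance of the operator combined with strict stationarity of the input immediately gives $\bm{x}(I) \overset{\text{d}}{=} \bm{x}(\tilde{I})$. The execution differs, though. The paper runs the argument through the machinery of its convolution-stationarity lemma: it writes $P_{X,I}(\bm{x}(I)=\bm{b})$ as an integral of the input density over the preimage set $\mathcal{O}^{-1}(\bm{x}(I)=\bm{b})$, notes that this preimage factorizes over pixels and is identical for $I$ and $\tilde{I}$, and invokes $P_{Z,I}=P_{Z,\tilde{I}}$. You instead use the bare pushforward principle that a single measurable map applied to random vectors equal in distribution yields outputs equal in distribution. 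Your route is slightly more robust: it needs no densities and sidesteps the awkwardness of "integrating over the preimage of a point" when the map is non-injective (e.g.\ ReLU, whose preimage of $0$ has positive Lebesgue measure, so the paper's density-over-preimage formulation is loose there), and your remarks that only Borel measurability is needed (no growth condition, unlike the consistency lemma) and that the division in PixelNorm needs a measurable convention where $\sigma(i,j)=0$ are both accurate and worth having. The paper's formulation buys uniformity — the same preimage-integration template is reused verbatim for convolution, up-sampling, and normalization — at the cost of these measure-theoretic rough edges; one small caveat on your side is that the set where $\sigma$ vanishes is measure zero only for suitably continuous inputs with $C\ge 2$, but since any fixed convention still yields a single measurable map applied at every pixel, your argument goes through regardless.
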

\begin{proof}
We prove this lemma for pixel-wise non-linear operators first, the proof for Pixel normalization can be done accordingly. Since the non-linearity $f(\cdot)$ is applied pixel-wise, this means
$$\mathcal{O}^{-1}(\bm{x}(I) = \bm{b}) = \times_{(i, j) \in I} f^{-1}(\bm{x}(i, j) = \bm{b}(i, j)) = \times_{(i, j) \in I} \{ \bm{a} \in \mathbb{R}^C \ | \ f(\bm{a}) = \bm{b}(i, j) \}. $$
Also it is straight-forward that $\mathcal{O}^{-1}(\bm{x}(I) = \bm{b}) = \mathcal{O}^{-1}(\bm{x}(\tilde{I}) = \bm{b})$, and by assumption we have $P_{Z, J}(\bm{z}(J)) = P_{Z, \tilde{J}}(\bm{z}(\tilde{J}))$. Therefore one can show in similar way as to prove Lemma \ref{lemma:conv_stationarity} (with $J=I, \tilde{J} = \tilde{I}$) that $\bm{x}(\cdot, \cdot)$ is a strictly spatial stationary process.
\end{proof}

\begin{lemma}
\label{lemma:bilinear_stationarity}
Bi-linear upsampling with scale $U=2$ and boundary cropping transforms a strictly stationary spatial process to a cyclostationary spatial process of period $(2, 2)$.
\end{lemma}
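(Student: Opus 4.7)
The approach is to express each output pixel $\bm{x}(i,j)$ as a linear combination of a $2\times 2$ block of input pixels whose coefficients depend only on the parities $(i \bmod 2, j \bmod 2)$, and then argue that shifting by $(2,2)$ in image space leaves those parities invariant while shifting the underlying $\bm{z}$-block by exactly $(1,1)$. Strict stationarity of $\bm{z}$ then transfers to $(2,2)$-cyclostationarity of $\bm{x}$.

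Concretely, first I would rewrite the four cases in the definition of bi-linear upsampling uniformly as
\[
\bm{x}(2h+\epsilon_1,\, 2w+\epsilon_2) \;=\; \sum_{a,b\in\{0,1\}} \alpha_{\epsilon_1,\epsilon_2}(a,b)\, \bm{z}(h+a,\, w+b),
\]
for $(\epsilon_1,\epsilon_2)\in\{0,1\}^2$, where the four coefficient tables $\alpha_{\epsilon_1,\epsilon_2}$ can be read off directly from the formulas used in Lemma \ref{lemma:bilinear_consistency}. The key structural observation is that, for any image index $(i,j)$, the coefficient table depends only on $(i \bmod 2, j \bmod 2)$, and the four supporting latent indices form the $2\times 2$ block $\{\lfloor i/2\rfloor, \lfloor i/2\rfloor+1\}\times\{\lfloor j/2\rfloor, \lfloor j/2\rfloor+1\}$.

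Next, for a finite index set $I=\{a:b\}\times\{c:d\}\subset\mathbb{Z}^2$ and its shift $\tilde{I}=\{a+2:b+2\}\times\{c+2:d+2\}$, I would define $J$ as the union of the $2\times 2$ supporting blocks of all $(i,j)\in I$, and analogously $\tilde{J}$ for $\tilde{I}$. Because shifting by $(2,2)$ preserves parities and increases $\lfloor\cdot/2\rfloor$ by exactly $1$, we get $\tilde{J}=\{(h+1,w+1):(h,w)\in J\}$, and moreover the parity of $(i+2,j+2)$ equals the parity of $(i,j)$. Hence there is a single linear map $\Phi_I$ (assembled from the $\alpha_{\epsilon_1,\epsilon_2}$ tables indexed by the parities of $I$) such that $\bm{x}(I)=\Phi_I(\bm{z}(J))$ and $\bm{x}(\tilde{I})=\Phi_I(\bm{z}(\tilde{J}))$ with the \emph{same} $\Phi_I$. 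Since $\bm{z}(\cdot,\cdot)$ is strictly stationary, $\bm{z}(J)\stackrel{d}{=}\bm{z}(\tilde{J})$, and applying the common deterministic map $\Phi_I$ to both sides yields $\bm{x}(I)\stackrel{d}{=}\bm{x}(\tilde{I})$, which is exactly cyclostationarity of period $(2,2)$.

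The main obstacle is purely bookkeeping: one must verify that, under the $(2,2)$ shift in image space, (i) the parity pattern of the output indices is preserved pointwise, and (ii) the resulting latent support $\tilde{J}$ is the coordinatewise $(1,1)$-shift of $J$ (with identical overlap structure), so that the \emph{same} linear operator $\Phi_I$ acts on both $\bm{z}(J)$ and $\bm{z}(\tilde{J})$. It is also worth remarking, for context, that a $(1,1)$ shift in image space does \emph{not} preserve distribution in general because it flips the parities and hence selects a different coefficient table $\alpha_{\epsilon_1,\epsilon_2}$; this is why the period is $(2,2)$ rather than $(1,1)$, and justifies the asterisked entry in Table~\ref{tab:theory_summary}.
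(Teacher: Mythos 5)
Your proposal is correct and follows essentially the same route as the paper's proof: identify the $2\times 2$ latent support of each output pixel, observe that a $(2,2)$ shift of the image indices corresponds to a $(1,1)$ shift of the latent support while the (parity-determined) deterministic map is unchanged, and then transfer strict stationarity of $\bm{z}$ to equality in distribution of the output patches. The only differences are presentational: the paper first enlarges $I$ to a rectangle with even corners and argues via the preimage-measure technique of Lemma \ref{lemma:conv_stationarity} (i.e.\ $\mathcal{O}^{-1}(\bm{x}(I)=\bm{b})=\mathcal{O}^{-1}(\bm{x}(\tilde{I})=\bm{b})$), whereas you push the distribution forward directly through the common linear map with explicit parity bookkeeping, which is an equivalent argument.
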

\begin{proof}
It suffice to prove that $P_{X, I}(\bm{x}(I) = \bm{b}) = P_{X, \tilde{I}}(\bm{x}(\tilde{I}) = \bm{b})$ for all
$$
I = \{a, a + h \} \times \{b : b + w \}, \tilde{I} =  \{a + 2, a + 2 + h \} \times \{b + 2 : b + 2 + w \}, \forall a, b \in \mathbb{Z}, h, w \in \mathbb{N}^+.
$$
Since for any finite $a, h$ there exist finite $H_{min} < H_{max}$ such that $\{a, a + h \} \subset \{2 H_{min} : 2 H_{max} \}$, we only need to show the equality for 
$$I = \{ 2H_{min} : 2H_{max}\} \times \{ 2W_{min} : 2W_{max}\}, \forall H_{min} < H_{max}, W_{min} < W_{max} \in \mathbb{Z},$$ 
and the corresponding shifted index set is $\tilde{I} = \{ 2H_{min}+2 : 2H_{max}+2\} \times \{ 2W_{min}+2 : 2W_{max}+2\}$.
From the operator presented in the proof of Lemma \ref{lemma:bilinear_consistency} we see that $J = \{H_{min} : H_{max}+1 \} \times \{W_{min} : W_{max}+1 \}$, $\tilde{J} = \{H_{min}+1 : H_{max}+2 \} \times \{W_{min}+1, ..., W_{max}+2 \}$, and $\mathcal{O}^{-1}(\bm{x}(I) = \bm{b}) = \mathcal{O}^{-1}(\bm{x}(\tilde{I}) = \bm{b})$. Therefore the equality holds given that $\bm{z}(\cdot, \cdot)$ a strictly stationary process (using proof techniques presented in the proof of Lemma \ref{lemma:conv_stationarity}).
\end{proof}

\begin{lemma}
\label{lemma:nearest_stationarity}
Nearest upsampling with scale $U=2$ transforms a strictly stationary spatial process to a cyclostationary spatial process of period $(2, 2)$.
\end{lemma}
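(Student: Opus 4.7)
The plan is to mirror the argument used in Lemma~\ref{lemma:bilinear_stationarity}, adapted to the simpler (purely index-based) structure of nearest upsampling. It suffices to show that for every finite index set $I \subset \mathbb{Z}^2$ and its $(2,2)$-shift $\tilde{I} = \{(i+2, j+2) : (i,j) \in I\}$, one has $P_{X, I}(\bm{x}(I) = \bm{b}) = P_{X, \tilde{I}}(\bm{x}(\tilde{I}) = \bm{b})$ for all measurable $\bm{b}$. A preliminary observation is that by Lemma~\ref{lemma:nearest_consistency}, $\bm{x}(\cdot,\cdot)$ is already a spatial stochastic process, so these finite-dimensional distributions are well defined.

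Next I would identify the latent patch supporting $\bm{x}(I)$. By the defining formula of nearest upsampling with scale $U=2$, $\bm{x}(i,j) = \bm{z}(\lfloor i/2 \rfloor, \lfloor j/2 \rfloor)$, so the relevant latent index set is $J = \{(\lfloor i/2 \rfloor, \lfloor j/2 \rfloor) : (i,j) \in I\}$, and a direct calculation using $\lfloor (i+2)/2 \rfloor = \lfloor i/2 \rfloor + 1$ gives $\tilde{J} = J + (1,1)$. The preimage $\mathcal{O}^{-1}(\bm{x}(I) = \bm{b})$ decomposes into pixel-level constraints on $\bm{z}(J)$: for each $(h,w) \in J$, $\bm{z}(h,w)$ must equal the common value of $\bm{b}(i,j)$ taken over all $(i,j) \in I$ that map to $(h,w)$. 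If the required values disagree for some $(h,w)$ the preimage is empty and both probabilities vanish; otherwise the preimage is a single point in $\bm{z}(J)$-space that coincides, after relabeling by $(1,1)$, with $\mathcal{O}^{-1}(\bm{x}(\tilde{I}) = \bm{b})$ in $\bm{z}(\tilde{J})$-space.

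I would then invoke strict stationarity of $\bm{z}(\cdot,\cdot)$ at the shift $(1,1)$ to obtain $P_{Z, J}(\bm{z}(J)) = P_{Z, \tilde{J}}(\bm{z}(\tilde{J}))$, and conclude the claimed distributional equality by integrating the matching preimages against matching latent densities, exactly as in the proof of Lemma~\ref{lemma:conv_stationarity}. The main subtlety, which I expect to be the only nontrivial point, is that nearest upsampling is many-to-one at the pixel level, so the preimage is a lower-dimensional constrained subset and the ``integral over $\mathcal{O}^{-1}$'' should really be read as a pushforward-measure computation with Dirac factors enforcing the within-cell equality constraints; however, these constraints have identical combinatorial structure for $I$ and $\tilde{I}$, so no asymmetry is introduced. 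Finally, I would note that shifts by $(1,0)$ or $(0,1)$ would reshuffle which output pixels share a latent cell and therefore would not preserve the joint distribution, which is exactly why the output is cyclostationary of period $(2,2)$ rather than strictly stationary.
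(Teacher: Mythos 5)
Your proposal is correct and follows essentially the same route as the paper: reduce to finite-dimensional distributions, identify the latent index set supporting $\bm{x}(I)$, observe that a $(2,2)$ shift of $I$ corresponds to a $(1,1)$ shift of $J$ with identical preimage structure, and invoke strict stationarity of $\bm{z}(\cdot,\cdot)$ via the argument of Lemma~\ref{lemma:conv_stationarity}. The only differences are cosmetic: the paper first embeds $I$ in an even-aligned patch (as in Lemma~\ref{lemma:bilinear_stationarity}) whereas you work with arbitrary finite $I$ through the floor map, and you are in fact slightly more careful than the paper in treating the many-to-one map as a pushforward rather than a literal density integral.
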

\begin{proof}
Similar to the proof of Lemma \ref{lemma:bilinear_stationarity}, it suffice to show $P_{X, I}(\bm{x}(I) = \bm{b}) = P_{X, \tilde{I}}(\bm{x}(\tilde{I}) = \bm{b})$ for
$I = \{ 2H_{min} : 2H_{max}\} \times \{ 2W_{min} : 2W_{max}\}, \forall H_{min} < H_{max}, W_{min} < W_{max} \in \mathbb{Z},$
and the corresponding shifted index set is $\tilde{I} = \{ 2H_{min}+2 : 2H_{max}+2\} \times \{ 2W_{min}+2 : 2W_{max}+2\}$.
From the operator presented in the proof of Lemma \ref{lemma:nearest_consistency} we see that $J = \{H_{min} : H_{max}\} \times \{W_{min} : W_{max} \}$, $\tilde{J} = \{H_{min}+1 : H_{max}+1 \} \times \{W_{min}+1, ..., W_{max}+1 \}$, and $\mathcal{O}^{-1}(\bm{x}(I) = \bm{b}) = \mathcal{O}^{-1}(\bm{x}(\tilde{I}) = \bm{b})$. Therefore the equality holds given that $\bm{z}(\cdot, \cdot)$ is a strictly stationary process (using proof techniques presented in the proof of Lemma \ref{lemma:conv_stationarity}).
\end{proof}

We note that similar proof techniques of Lemmas \ref{lemma:bilinear_stationarity} and \ref{lemma:nearest_stationarity} can be applied to show the general result: these two consistent up-sampling methods increase the stationarity period by 2.

\section{Related work: extended discussions}
\label{sec:appendix_related_work}

\paragraph{Generating high fidelity images}
State-of-the-art \emph{un-conditional} GANs are able to generate high fidelity images of size $1024 \times 1024$  \citep{karras2018progressive,karras2018style,brock2018large}. However these GAN models require full-scale images for training, and they can only generate images of the same \emph{fixed} size as the training images. By contrast, $\infty$-GAN only takes small patches of different resolutions as training data, and the generator, once trained, can generate images of arbitrary size either in one shot or incrementally.

Other non-adversarial generative models have also been shown to be capable of high fidelity image generation \citep{kingma2018glow,huang2018introvae,menick2018generating}. Specifically, the Subscale Pixel Network (SPN) \citep{menick2018generating}, as the state-of-the-art auto-regressive generative model, uses a multi-scale ordering by sub-sampling pixel locations into rows and columns and progressing in raster ordering \citep{reed2017parallel}. This particular ordering requires a canvas of pre-defined size, making SPNs incapable of generating images of arbitrary/infinite sizes. 

\paragraph{(Stationary) stochastic processes}
Auto-regressive processes, moving average processes and their combinations have been widely applied to spatial data modelling \citep[see e.g.][]{huang1984autoregressive,haining1978moving,ripley1981spatial,anselin2013spatial}. The $\infty$-GAN's generator can be viewed as an extension to the non-linear moving average model \citep{robinson1977estimation}; it is a consistent transform of stochastic processes, and the dependence between an image patch and the corresponding latent variables is independent with the location of the patch in that image.

\paragraph{Super resolution}
GANs have also been applied to super resolution, i.e.~estimating a high resolution image from its low-resolution counterpart \citep{ledig2017photo,sonderby2016amortised,wang2018high}. Our up-scaling network is similar to StackGAN++ \citep{zhang2018stackgan++}: both methods train a stack of generators (with the pre-image outputs from the last generator as the inputs) on multi-resolution images \citep{denton2015deep, zhang2017stackgan}. Still all existing methods including StackGAN++ require full images for training, while our up-scaling network is trained on patches. 

\paragraph{Texture synthesis}  
Give a reference texture image, model-free texture synthesis techniques perform \emph{conditional} generation by re-sampling pixels or patches from the original texture \citep[see e.g.][]{barnes2009patchmatch,efros1999texture,wei2000fast,efros2001image,kwatra2003graphcut}. Perhaps PatchMatch \citep{barnes2009patchmatch} is the closest related approach, which fills in the missing values by a fast nearest neighbour search over patches from a reference image.

Model-based approaches that use CNNs have recently become popular for texture synthesis. With a pre-trained network as texture feature extractor \citep{gatys2015neural,gatys2015texture,johnson2016perceptual}, recent approaches used (conditional) GANs to generate texture images with features that are similar to the features from either a reference input \citep{ulyanov2016texture,li2016precomputed}, or from a pre-defined set known as style bank \citep{ulyanov2017improved}. The closely related approach is the (Periodic) Spatial GAN (PSGAN) \citep{bergmann2017learning,jetchev2016texture}, however it uses an inconsistent generator, and it requires large images for learning the stationarity pattern. By contrast, our model only sees small patches, and the dependency between patches is induced by the overlapping pixels in $\bm{z}$ space.

\paragraph{Terrain/map generation}
Traditionally terrain/map generation for video games are dominated by procedural generation techniques, e.g.~procedural noise functions  \citep{perlin1985noise,perlin2002improving,fournier1982computer} and physical process simulations \citep{musgrave1989synthesis,olsen2004realtime}. A recent attempt to apply GANs to world map generation is presented in \citep{beckham2017step}, which first trained a DCGAN \citep{radford2015unsupervised} on $512 \times 512$ crops from the NASA height map data of the Earth, then applied pix2pix \citep{isola2017image} to paint the terrain texture conditioned on the height map. This approach uses vector inputs and inconsistent CNNs, thus it can only generate terrains of size $512 \times 512$.

\begin{figure}[t]
\centering
\includegraphics[width=1\linewidth]{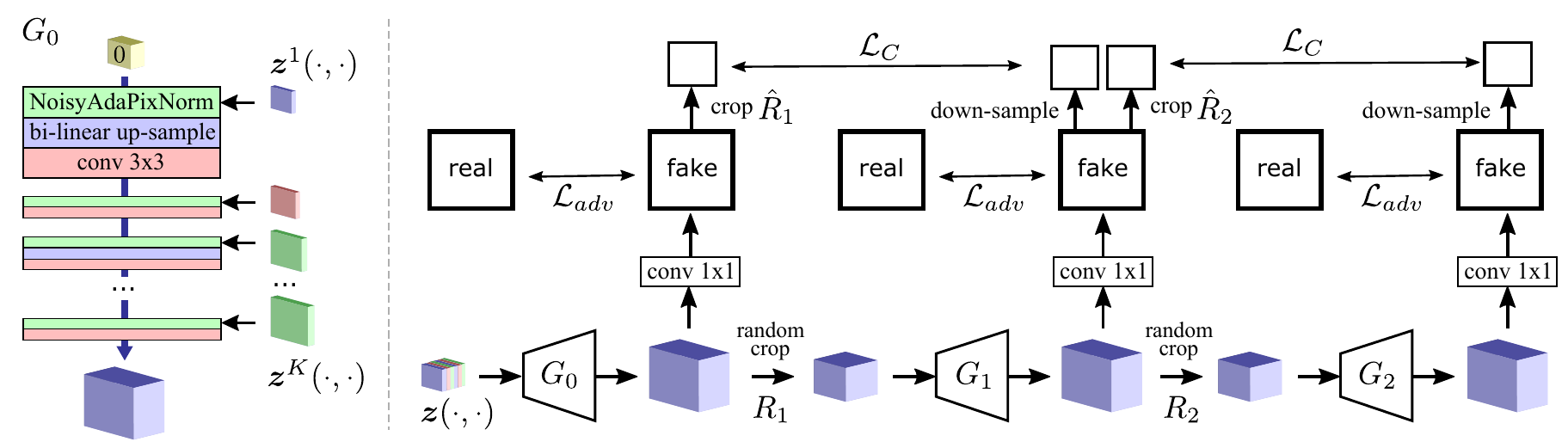}
    \caption{The network architecture ($L=2$) and the training mechanism. }
    \label{fig:architecture}
\end{figure}

\section{Infinite GAN}
\label{sec:appendix_infinite_gan}

As motivated by the theoretical results above, we design $\infty$-GAN, whose generator is well-defined in terms of our theory, to allow generation of images at different \emph{resolutions} and in arbitrary image \emph{sizes}.\footnote{Note the difference between image resolution and image size.}
To enable efficient training on vary large images, we follow the multi-scale training strategy \citep{denton2015deep,zhang2017stackgan,zhang2018stackgan++,karras2018progressive} to construct a stack of $L+1$ generators $\{G_0(\cdot), ..., G_{L}(\cdot) \}$, each of them transforms a spatial stochastic process to another spatial stochastic process.
Practically, this means given a sample of the latent tensor $\bm{z}(J)$, $\infty$-GAN is capable of generating a set of images $\{\bm{x}_0(I_0), ..., \bm{x}_L(I_L) \}$ of different resolutions. Here $I_l$ is the corresponding pixel index set of the $l^{th}$ generator's image output. 
We name the first generator $G_0(\cdot)$ as the \emph{extension network} of $\infty$-GAN's generators, which is mainly responsible for learning the dependency structure in data that is crucial for incrementally extending images to larger \emph{sizes}. As the other networks $G_l(\cdot), l \geq 1$ in the hierarchy are used for up-scaling the \emph{resolution} of the images, these networks are referred to as the \emph{up-scaling networks}. 

\paragraph{The extension network}
The extension network $G_0(\cdot)$ takes in as inputs finite size samples from multiple latent processes $\{\bm{z}^1(\cdot, \cdot), ..., \bm{z}^K(\cdot, \cdot) \}$, and produces a \emph{pre-image feature tensor} $\bm{f}_0(I_0)$ that is later transformed into an image patch $\bm{x}_0(I_0)$ with $1 \times 1$ convolution \citep{zhang2018stackgan++,karras2018progressive}. See Figure \ref{fig:architecture} for a visualization, in detail, we construct $G_0(\cdot)$ with $K$ blocks of $\{\text{Noisy AdaPixNorm, Conv } 3 \times 3 \}$ layers (possibly with up-sampling layers in the blocks), and the convolutional layers use stride 1 and no padding. The Noisy AdaPixNorm replaces instance normalization \citep{ulyanov2016instance} in StyleGAN's noise-in AdaIN layers \citep{karras2018style} with pixel normalization.\footnote{Even when the input stochastic process is stationary and ergodic, instance norm's estimation of ensemble mean and variance across pixel indices can vary a lot depending on the image size.} 
Therefore with the inputs $\{ \mathbf{h}(i, j) \in \mathbb{R}^C \}$ and the latent tensor values $\{ \bm{z}^k(i, j) \in \mathbb{R} \}$, the $k^{th}$ Noisy AdaPixNorm layer's output is $\mathbf{y}(i, j) = \bm{\beta}_k \odot \text{PixelNorm}(\mathbf{h}(i, j) + \bm{z}^k(i, j) \bm{w}_k ) + \bm{\gamma}_k$, where $\bm{\beta}_k, \bm{\gamma}_k, \bm{w}_k \in \mathbb{R}^C$.
For training, we apply the Wasserstein GAN approach with gradient penalty (WGAN-GP)  \citep{gulrajani2017improved}, using real image \emph{patches} sampled from the data $\mathcal{D}_0$ and the generator:
\begin{align}
\mathcal{L}_{adv}(G_0, D_0) = 
\mathbb{E}_{\mathbf{x} \sim \mathcal{D}_0}[D_0(\mathbf{x})] - \mathbb{E}_{\mathbf{x} \sim P_{X_0}}[D_0(\mathbf{x})] + \lambda \mathbb{E}_{\mathbf{x} \sim \tilde{P}_{X_0}}[( ||\nabla_{\mathbf{x}} D_0(\mathbf{x}) || - 1)^2].
\label{eq:wgan_gp_loss}
\end{align}
Here the image patches in $\mathcal{D}_0$ are generated by random cropping from a very large training image.
The patch size of crops needs to be selected carefully. Assuming the extension network contains $K$ up-sampling layers with scale 2, this means the \emph{model patch}, defined as the projected field in $\bm{x}_0$ space for a pixel in $\bm{z}^1$ space, has size $(2^K, 2^K)$. Also due to the usage of consistent operations, it requires the latent tensor in $\bm{z}^1$ space to have spatial size at least $(h, w)$ in order to generate a model patch in $\bm{x}_0$ space. Assuming i.i.d.~Gaussian variables for all latent processes, this means the output process $\bm{x}_0(\cdot, \cdot)$ has stationarity period $(2^K, 2^K)$, and it is $(2^Kh, 2^Kw)$-dependent in the sense that any two pixels $\bm{x}_0(i_1, j_1)$ and $\bm{x}_0(i_2, j_2)$ with $|i_1 - i_2| > 2^Kh$ and/or $|j_1 - j_2| > 2^Kw$ are independent. This means one can select the patch size $(H, W)$ of the training images as $(2^{K+1}, 2^{K+1}) \leq (H, W) \leq (2^K h, 2^K w)$ in order to learn both the marginal distribution over a model patch, as well as the dependencies between model patches. Empirically we select $(H, W) = (2^{K+1}, 2^{K+1})$, which means in training the spatial size of the latent tensor in $\bm{z}^1$ space is $(h+1, w+1)$.

\paragraph{The up-scaling networks}
Each up-scaling network $G_l(\cdot)$ takes the pre-image output $\bm{f}_{l-1}(J_l)$ from the last generator with some index set $J_l$, and produces a feature tensor $\bm{f}_l(I_l)$ with larger spatial size by consistent up-sampling and convolution. This feature tensor is then transformed into the $l^{th}$ layer's image output $\bm{x}_l(I_l)$ by a $1 \times 1$ convolution. 

For training, apart from the WGAN-GP loss (\ref{eq:wgan_gp_loss}), up-scaling consistency loss is applied to enforce the alignment of images in the multi-scale hierarchy.
However, with up-sampling layers in use, the image size of $\bm{x}(I_l)$ grows exponentially. We introduce an efficient training method for the up-scaling networks $G_l(\cdot), l = 1, ..., L$, by using image patches of fixed size at all resolutions in the hierarchy. Assume that all the generated images during training have fixed size $(H, W)$. Then one can perform a \emph{random crop} operation with index set $J_l = R_l(I_{l-1})$ on the feature tensor $\bm{f}_{l-1}(I_{l-1})$ to obtain the input $\bm{f}_{l-1}(J_l)$ for $G_l(\cdot)$, and the cropping operator $R_l$ is selected so that the generated image $\bm{x}_{l}(I_l)$ has the desired size $(H, W)$. Later $\bm{x}_l(I_l)$ is down-sampled (with method $d(\cdot)$) to match the resolution of $\bm{x}_{l-1}$, and we crop the corresponding low-resolution patch from $\bm{x}_{l-1}(I_{l-1})$ with index set $\hat{I}_{l-1} = \hat{R}_l(I_{l-1})$ to compute the consistency loss between images of different resolutions:
\begin{equation}
\mathcal{L}_{C}(G_l) = 
\mathbb{E}_{\bm{x}_{l-1} \sim P_{X_{l-1}}} \mathbb{E}_{\bm{x}_{l} \sim P_{X_{l}}} \left[ || \bm{x}_{l-1}(\hat{I}_{l-1})
- d(\bm{x}_l(I_l)) ||_2^2\right].
\end{equation}
Since the up-scaling network uses consistent up-sampling methods, the zoom-in ratio $|I_l| / |J_l|$ is not necessarily a power of two. Therefore we define $\hat{R}_l$ such that $\bm{x}_{l-1}(\hat{I}_{l-1})$ is the centered $(H/2, W/2)$ patch within $\bm{x}_{l-1}(J_l)$. 
In sum, the total loss for training the up-scaling network is:
\begin{equation}
\mathcal{L}(G_l, D_l) = \mathcal{L}_{adv}(G_l, D_l) + \lambda_{l} \mathcal{L}_{C}(G_l).
\end{equation}
See Figure \ref{fig:architecture} for a visualization. We further add a consistency loss between images generated by $G_0$ and down-sampled images from $G_L$, which we empirically find to improve results.
The images patches in $\mathcal{D}_l$ are again generated by random crops from real images, and for all $l \geq 0$ the image patches have the same fixed size. It also means the discriminator $D_l$ used in $\mathcal{L}_{adv}$ sees small patches only, unlike previous multi-scale training methods \citep{zhang2018stackgan++,karras2018progressive,denton2015deep} where the discriminators in the hierarchy observe images of exponentially increasing size. 


\section{Additional experiments}
\label{sec:appendix_more_exp}
\subsection{Qualitative evaluations on texture and panoramic data}

\paragraph{Texture generation}
We train the $\infty$-GAN model on 4 texture images. The full-scale data has size $512 \times 512$ (see the first column in Figure \ref{fig:texture}),\footnote{The visualized images are compressed due to ArXiv file size limits. For the original images see \href{https://drive.google.com/drive/folders/14VgV-GMNIfK7qglIUhr-Q96Sd_mnPmit?usp=sharing}{this url}.} and the three datasets $\mathcal{D}_0, \mathcal{D}_1$ and $\mathcal{D}_2$ are constructed in a similar way as done in the world generation task. The third column of Figure \ref{fig:texture} shows that the $\infty$-GAN has captured the texture features, even when the network has never observed $256 \times 256$ image patches with the full-scale resolution. 
We further qualitatively evaluate the $2048 \times 2048$ images generated by $\infty$-GAN (the last 4 columns); for reference we tiled the original images to the same size, which is shown in the second column of Figure \ref{fig:texture}. It is clear that the $\infty$-GAN model, after training, is capable of generating both realistic looking and non-repeating texture images.

\newcommand{\texture}[1]{
      \includegraphics[width=0.1\textwidth]{figs/textures/#1.jpg}&
    \includegraphics[width=0.1\textwidth]{figs/textures/#1-tiled.jpg}&
    \includegraphics[width=0.1\textwidth]{figs/textures/#1-fake-0_512.jpg}&
    \includegraphics[width=0.1\textwidth]{figs/textures/#1-fake-0.jpg}&
    \includegraphics[width=0.1\textwidth]{figs/textures/#1-fake-1.jpg}&
    \includegraphics[width=0.1\textwidth]{figs/textures/#1-fake-2.jpg}&
    \includegraphics[width=0.1\textwidth]{figs/textures/#1-fake-3.jpg}\\
}
\begin{figure}[t]
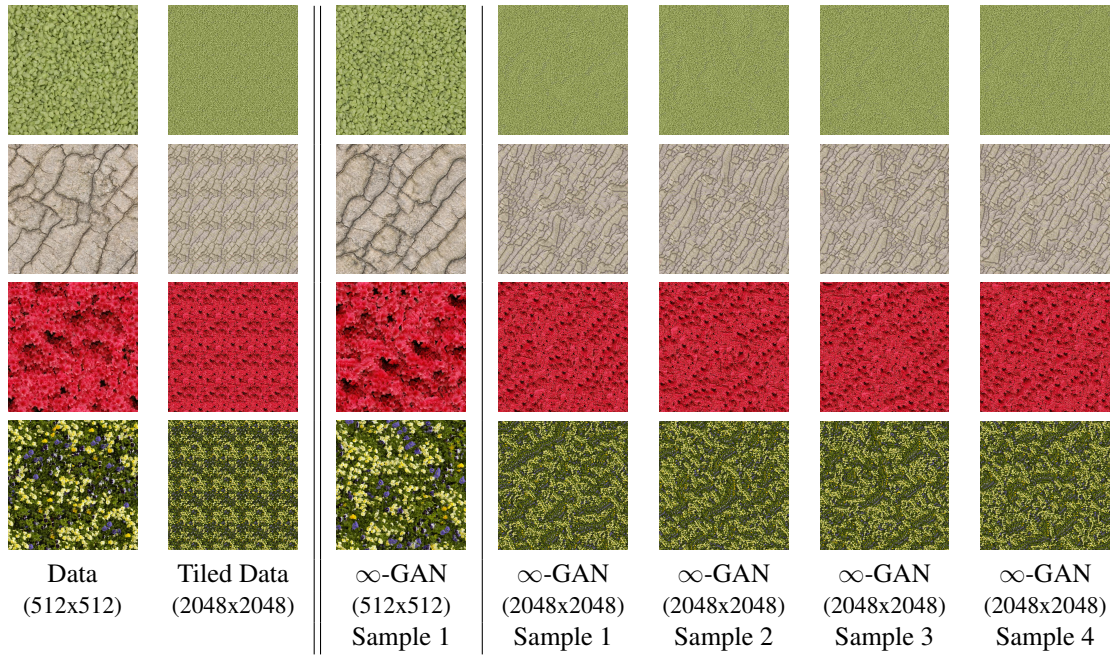

  \centering
  \begin{tabular}{cc||c|cccc}
    \texture{TexturesCom_AbstractVarious0039_2_seamless_S-8}
    \texture{TexturesCom_BarkDecidious0134_1_seamless_S-8}
    \texture{TexturesCom_FlowerBeds0029_1_seamless_S-8}
    \texture{TexturesCom_Groundplants0055_1_seamless_S-8}
    Data&Tiled Data&$\infty$-GAN & $\infty$-GAN & $\infty$-GAN & $\infty$-GAN & $\infty$-GAN\\
    {\small (512x512)}&{\small (2048x2048)} &{\small (512x512)}&{\small (2048x2048)}&{\small (2048x2048)}&{\small (2048x2048)}&{\small (2048x2048)}\\
    &&Sample 1&Sample 1&Sample 2&Sample 3& Sample 4\\
    \end{tabular}
  	\caption{Visualizing generated textures images. The first two columns show the training data and its tiled version. The third column shows generated images at the same scale as the input, and the last four columns show the huge samples as compared to the tiled training images. Best view in color. \label{fig:texture}}
  	\vspace{-5pt}
  \end{figure}

\paragraph{Panoramic city view}
Lastly we train the extension network $G_0(\cdot)$ of the $\infty$-GAN on a panoramic image of the New York city, and then use the model to extend the landscape view horizontally. The training data and the generated samples are shown in Figure \ref{fig:nyc_result}. We see that the $\infty$-GAN model is able to generate city views of different patterns (e.g. with many skyscrapers and/or harbour views).

\begin{figure}
    \centering
    \subfigure[Training data ($64 \times 708$) \label{fig:nyc_data}]{\includegraphics[width=0.43\linewidth]{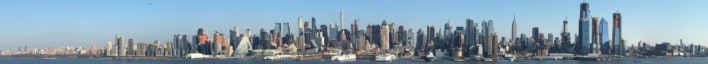}} \hfill
    \subfigure[Sample 1 ($64 \times 640$)\label{fig:nyc_sample1}]{\includegraphics[width=0.40\linewidth]{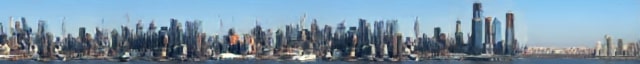}}\\
     \vspace{-5pt}
    \subfigure[Sample 2 ($64 \times 1792$)\label{fig:nyc_long_sample1}]{\includegraphics[width=1.0\linewidth]{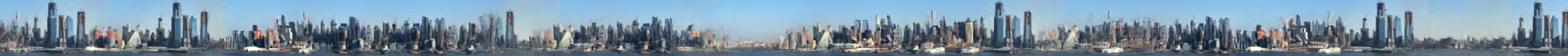}}\\
     \vspace{-5pt}
    \subfigure[Sample 3 ($64 \times 1792$)\label{fig:nyc_long_sample2}]{\includegraphics[width=1.0\linewidth]{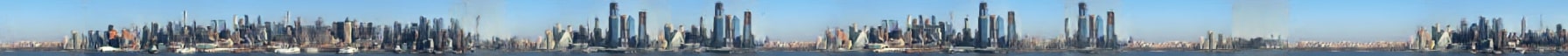}}
    \vspace{-5pt}
    \caption{Generated panoramic views. The model only sees $64 \times 64$ data patches during training. }
    \label{fig:nyc_result}
    \vspace{-12pt}
\end{figure}


\section{Experimental details}
\label{sec:appendix_exp_details}

\subsection{Data collection}

\paragraph{Texture} We download from \url{https://www.textures.com/} 4 texture images. Then we construct datasets from each of them. Specifically, the original image is referred to as ``texture 4x'', and we down-sized this image to ``texture 2x'' and ``texture 1x''. Then random cropping is applied to each of the 4 images to obtain datasets $\{\mathcal{D}_0, \mathcal{D}_1, \mathcal{D}_2\}$ of increasing resolutions, each of them contains $(64, 64)$ image patches of the corresponding resolution. 

\paragraph{Panoramic landscape view}
We download the panoramic landscape images from 

\url{https://en.wikipedia.org/wiki/Skyline#/media/File:10_mile_panorama_of_NYC,_Feb.,_2018.jpg}

We removed the numbers and then cropped the image to remove part of the sky.
We further down-sized the image to have vertical spatial size 64. Then the dataset $\mathcal{D}_0$ contains $(64, 64)$ image patches which are randomly cropped from the full-scale image. 

\paragraph{Satellite world map} The full image is downloaded from: 

\url{https://eoimages.gsfc.nasa.gov/images/imagerecords/74000/74117/world.200408.3x5400x2700.png}

We remove the bottom pixels containing Antarctica in order to remove the bias introduced by the Mercator projection and then down-sized it to half of the size in each spatial dimension. Then multi-scale down-sizing and random cropping are applied to each of the 4 images to obtain datasets $\{\mathcal{D}_0, \mathcal{D}_1, \mathcal{D}_2 \}$ of increasing resolutions, each of them contains $(64, 64)$ image patches of the corresponding resolution. 

\subsection{Network architectures and training details}

\paragraph{Network architecture for $G_0(\cdot)$} For example in training time we start from an $6 \times 6 \times 128$ input of zero values. Note here that all the convolutional layers use stride 1 and no padding. The latent tensors $\{ \bm{z}^k(i, j) \}$ are i.i.d.~Gaussian noises during training, which is sampled when computing a Noisy AdaPixNorm (NAPN) operation. They have the same spatial shape as the input to NAPN but with channel 1, e.g.~if the input tensor has shape $(6, 6, C)$, then the sampled $\bm{z}^k$ tensor will have shape $(6, 6, 1)$. See main text for the math expression of the NAPN operation. We use bi-linear interpolation for up-sampling, to remain consistent we crop out the boundary pixels with edge size 1. This means that a tensor of shape $(H, W, C)$, after this up-sampling, will be transformed to another tensor of shape $(2H-2, 2W-2, C)$.

With input tensor of shape $(H, W, C) = (6, 6, 128)$:

Block 1:
$$(6, 6, 128) \text{ input} \rightarrow \text{NAPN} \rightarrow \text{bi-linear up-sample} \rightarrow \text{conv } 3 \times 3 \rightarrow \text{ReLU} \rightarrow \text{output } (8, 8, 128).$$
Block 2:
$$(8, 8, 128) \text{ input} \rightarrow \text{NAPN} \rightarrow \text{bi-linear up-sample} \rightarrow \text{conv } 3 \times 3 \rightarrow \text{ReLU} \rightarrow \text{output } (12, 12, 128).$$
Block 3:
$$(12, 12, 128) \text{ input} \rightarrow \text{NAPN} \rightarrow \text{bi-linear up-sample} \rightarrow \text{conv } 3 \times 3 \rightarrow \text{ReLU} \rightarrow \text{output } (20, 20, 128).$$
Block 4:
$$(20, 20, 128) \text{ input} \rightarrow \text{NAPN} \rightarrow \text{bi-linear up-sample} \rightarrow \text{conv } 3 \times 3 \rightarrow \text{ReLU} \rightarrow \text{output } (36, 36, 128).$$
Block 5:
$$(36, 36, 128) \text{ input} \rightarrow \text{NAPN} \rightarrow \text{conv } 3 \times 3 \rightarrow \text{ReLU} \rightarrow \text{output } (34, 34, 64).$$
Block 6:
$$(34, 34, 64) \text{ input} \rightarrow \text{NAPN} \rightarrow \text{bi-linear up-sample} \rightarrow \text{conv } 3 \times 3 \rightarrow \text{ReLU} \rightarrow \text{output } (64, 64, 64).$$
These blocks transforms the $(6, 6, 128)$ input tensor to $\bm{f}_0(I_0)$ of shape $(64, 64, 64)$. Lastly a $1\times1$ convolution and a Tanh layer are applied to $\bm{f}_0(I_0)$ to transform it into an image $\bm{x}_0(I_0)$ of shape $(64, 64, 3)$.

Note that batch-normalization (BN) \citep{ioffe2015batch} layers might be added before ReLU but after the conv $3 \times 3$ layers. Since in test time the ``evaluation mode'' of BN is used, then the applied normalization statistics are independent to the current latent tensors, therefore this test-time BN is still a consistent transformation of stochastic processes. 

\paragraph{Network architecture for $G_{l}(\cdot), l \geq 1$} For example with an input $\bm{f}_{l-l}(J_l)$ of shape $(34, 34, 64)$ which is cropped from the output of the last network $\bm{f}_{l-l}(I_{l-1})$:
$$(34, 34, 64) \text{ input} \rightarrow \text{bi-linear up-sample} \rightarrow \text{conv } 3 \times 3 \rightarrow \text{ReLU} \rightarrow \text{output } (64, 64, 64).$$
These layers transforms the $(34, 34, 64)$ input tensor $\bm{f}_{l-l}(J_l)$ to $\bm{f}_l(I_l)$ of shape $(64, 64, 64)$. Lastly a $1\times1$ convolution and a Tanh layer are applied to $\bm{f}_l(I_l)$ to transform it into an image $\bm{x}_l(I_l)$ of shape $(64, 64, 3)$.

\paragraph{Network architecture for $D_{l}(\cdot), l \geq 0$} The discriminators at all layers use the same architecture, and they observes image inputs of the same size (in our case $(64, 64, 3)$).
With an input $\mathbf{x}$ of shape $(64, 64, 3)$, a discriminator computes the scalar output by the following transforms. The architecture for each of the discriminators has the following architecture.

Layer 1:
$$(64, 64, 3) \text{ input} \rightarrow \text{ conv } 3 \times 3, \text{ stride } 2, \text{ zero padding size } 1 \rightarrow \text{LeakyReLU }(0.2) \rightarrow (32, 32, 64)$$
Layer 2:
$$(32, 32, 64) \text{ input} \rightarrow \text{ conv } 3 \times 3, \text{ stride } 2, \text{ zero padding size } 1 \rightarrow \text{LeakyReLU }(0.2) \rightarrow (16, 16, 64)$$
Layer 3:
$$(16, 16, 64) \text{ input} \rightarrow \text{ conv } 3 \times 3, \text{ stride } 2, \text{ zero padding size } 1 \rightarrow \text{LeakyReLU }(0.2) \rightarrow (8, 8, 128)$$
Layer 4:
$$(8, 8, 128) \text{ input} \rightarrow \text{ conv } 3 \times 3, \text{ stride } 2, \text{ zero padding size } 1 \rightarrow \text{LeakyReLU }(0.2) \rightarrow (4, 4, 128)$$
Layer 5:
$$(4, 4, 128) \text{ input} \rightarrow \text{ conv } 4 \times 4, \text{ stride } 1, \text{ no padding} \rightarrow \text{LeakyReLU }(0.2) \rightarrow (1, 1, 128)$$
Layer 6:
$$(1, 1, 128) \text{ input} \rightarrow \text{ conv } 1 \times 1, \text{ stride } 1, \text{ no padding} \rightarrow (1, 1, 1)$$

\paragraph{An important note} The index sets $\{J_l\}$ and $\{I_l\}$ are used only for mathematical rigor, none of the generative networks $G_i(\cdot)$ nor the discriminative networks $D_i(\cdot)$ take these index sets as inputs. 

\paragraph{The hyper-parameters}
We set the $\lambda_l$ parameters associated with the consistency losses $\mathcal{L}_C(G_l)$ as $\lambda_1 = 1000, \lambda_2 = 1000$. As we also use the consistency loss to match between a patch cropped from $\bm{x}_0(I_0)$ and a down-sampled version of $\bm{x}_2(I_2)$, the balancing parameter for this consistency loss is selected as $16000$. Average pooling is used as the down-sampling method $d(\cdot)$. The WGAN-GP losses $\mathcal{L}_{adv}(G_l, D_l)$ use $\lambda=10$ for the gradient penalty. The training schedule is to optimize the parameters of $\{D_l(\cdot)\}$ for 5 iterations then optimize the parameters of $\{G_l(\cdot)\}$. We use Adam optimizer fot both the generators and the discriminators is selected, with learning rate $0.0001$ and the momentum damping rate $\beta_1 = 0.5$. 

\paragraph{Baseline}
We use a PyTorch reimplementation of Spatial GAN and PSGAN which is linked to from the original authors repository.\footnote{\url{https://github.com/zalandoresearch/famos}} 
We perform a grid search over hyperparameter settings for the baseline.  We search find channel sizes for the generator and the discriminator in $[10,20,30,40,50,60,70,80]$ and search input crop sizes in $[128,256,512,1024]$.  We choose the best architecture based on FID score.  For spatial GAN this was a final channel size of $20$ and a input crop size of $256$.  For PSGAN this was a final channel size of 40 and an input crop size of $512$. 

\end{document}